\newtheorem*{theorem*}{Theorem}
\newtheorem{theorem}{Theorem}
\theoremstyle{definition}
\newtheorem{definition}{Definition}
\newcommand{\myfnsymbol}[1]{%
  \expandafter\@myfnsymbol\csname c@#1\endcsname
}
\newcommand{\@myfnsymbol}[1]{%
  \ifcase #1
  \or \TextOrMath{\textdagger}{\dagger}
  \or \TextOrMath{\textasteriskcentered}{*}
  \fi
}
\newcommand{\equalcontributor}{\@myfnsymbol{1}}
\newcommand{\correspondingA}{\@myfnsymbol{2}}
\title{Sinusoidal Initialization, Time for a New Start}
\author{%
Alberto Fernández-Hernández\textsuperscript{1,\equalcontributor} \\
\texttt{a.fernandez@upv.es} \\
\And
Jose I. Mestre\textsuperscript{2,\equalcontributor} \\
\texttt{jmiravet@uji.es} \\
\And
Manuel F. Dolz\textsuperscript{2} \\
\texttt{dolzm@uji.es} \\
\And
Jose Duato\textsuperscript{3} \\
\texttt{jose.duato@openchip.com} \\
\And
Enrique S. Quintana-Ortí\textsuperscript{1} \\
\texttt{quintana@disca.upv.es} \\
}
\begin{document}

\maketitle
\vspace{-0.6cm}
\begin{center}
\textsuperscript{1}Universitat Politècnica de València \\
\textsuperscript{2}Universitat Jaume I \\
\textsuperscript{3}Openchip \& Software Technologies \\
\end{center}

\renewcommand{\thefootnote}{\textdagger}
\footnotetext{These authors contributed equally to this work.}
\renewcommand{\thefootnote}{}
\footnotetext{Code available at: \url{https://github.com/jmiravet/Sinusoidal-Initialization}}

\setcounter{footnote}{0}
\renewcommand{\thefootnote}{\fnsymbol{footnote}}

\begin{abstract}
Initialization plays a critical role in Deep Neural Network training, directly influencing convergence, stability, and generalization. Common approaches such as Glorot and He initializations rely on randomness, which can produce uneven weight distributions across layer connections. In this paper, we introduce the \textit{Sinusoidal} initialization, a novel deterministic method that employs sinusoidal functions to construct structured weight matrices expressly to improve the spread and balance of weights throughout the network while simultaneously fostering a more uniform, well‑conditioned distribution of neuron activation states from the very first forward pass. Because \textit{Sinusoidal} initialization begins with weights and activations that are already evenly and efficiently utilized, it delivers consistently faster convergence, greater training stability, and higher final accuracy across a wide range of models, including convolutional neural networks, vision transformers, and large language models. On average, our experiments show an increase of \(4.9\%\) in final validation accuracy and \(20.9 \%\) in convergence speed. By replacing randomness with structure, this initialization provides a stronger and more reliable foundation for Deep Learning systems.
\end{abstract}


\section{Introduction}

Weight initialization remains a critical yet often underappreciated component in the successful training of \acp{dnn} (including \acp{cnn} and Transformer-based architectures). Although numerous training strategies and architectural innovations receive substantial attention, initialization itself frequently remains in the background, implicitly accepted as a solved or trivial problem. Nonetheless, the initial setting of weights profoundly affects training dynamics, influencing convergence speed, stability, and even the final model's ability to generalize~\citep{lecun1998efficient, glorot2010understanding, he2015delving}.

Historically, weight initialization has evolved significantly. Early methods such as those proposed by~\cite{lecun1998efficient} were primarily heuristic, offering basic variance scaling principles. The seminal work of~\cite{glorot2010understanding} rigorously connected weight initialization to variance preservation across layers, introducing a randomized scaling approach that became foundational for \ac{dl}. Later,~\cite{he2015delving} refined this idea, tailoring variance preservation specifically to networks with \acp{relu}, and their approach quickly established itself as a default choice in modern \ac{dnn} training pipelines.

Despite the established efficacy of these stochastic variance-preserving methods, a fundamental question remains largely unchallenged in current literature: \textit{Is randomness fundamentally necessary for effective neural network initialization?} Random initialization introduces variability that can complicate reproducibility, debugging, and systematic experimentation. Moreover, it implicitly assumes that the optimal starting conditions for training must inherently involve stochasticity, without thoroughly exploring deterministic alternatives that might offer similar or even superior statistical properties.

In this paper, we directly address this scarcely explored assumption by proposing a novel deterministic initialization scheme, termed \textit{Sinusoidal} initialization. This technique preserves the critical variance-scaling property central to the effectiveness of Glorot and He initialization schemes, while completely removing randomness from the initialization procedure. It achieves this by employing \textit{Sinusoidal} functions to deterministically initialize weight matrices, ensuring that each neuron in a given layer receives a distinct weight configuration. This use of \textit{Sinusoidal} patterns enables rich diversity in weight values while maintaining global variance characteristics, striking a balance between structure and expressive power.

Specifically, our contributions are threefold:

\begin{itemize}
\item We introduce the \textit{Sinusoidal} initialization, a deterministic initialization strategy utilizing \textit{Sinusoidal} patterns, explicitly emphasizing symmetry, functional independence, and rigorous variance control for signal propagation, thus eliminating stochastic uncertainties.
\item We present a sound theoretical framework that reveals how stochastic initialization schemes can hinder neuron activation efficiency by inducing asymmetries and imbalances in neuron behavior. By contrast, deterministic approaches that enforce structural symmetry, such as the proposed \textit{Sinusoidal} initialization, naturally mitigate these issues, leading to more stable activations and improved gradient propagation across layers.
\item Through extensive empirical evaluations involving \acp{cnn}, including some efficient architectures, Vision Transformers, and language models, we rigorously demonstrate the empirical superiority of the \textit{Sinusoidal} initialization over standard stochastic initializations, demonstrating faster convergence and higher final accuracy.
\end{itemize}

The \textit{Sinusoidal} initialization consistently yields higher final accuracy and faster convergence, as detailed in Section~\ref{sec:experiments}. These benefits are already evident in the case of ResNet-50 trained on CIFAR-100, presented in Figure~\ref{fig:intro_results} as a representative example. The experimental setup and results for additional \ac{dnn} architectures are also provided in Section~\ref{sec:experiments}.

\begin{figure}[ht]
\vspace{-1.0em}
    \centering
    \adjustbox{valign=c}{\includegraphics[width=0.31\linewidth,trim={0.1cm 0.5cm 1.4cm 0.8cm},clip]{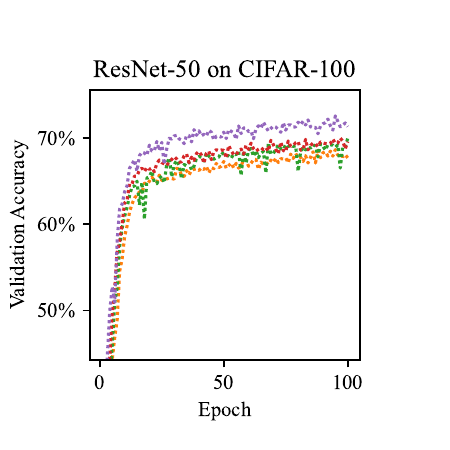}}
    \hspace{0cm}
    \adjustbox{valign=c}{\includegraphics[height=1.5cm]{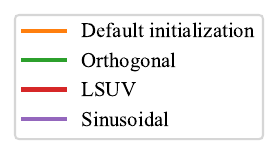}}
    \vspace{-0.2em}
    \caption{Validation accuracy over training epochs for ResNet-50 on CIFAR-100, comparing \textit{Sinusoidal} with other initialization schemes.}
    \label{fig:intro_results}
\end{figure}

By challenging the ingrained belief in randomness as a necessary condition for initialization, we encourage new avenues for rethinking foundational aspects of \ac{dnn} training, advocating for deterministic and comprehensive practices in the future of \ac{dl} research.

The remainder of this paper is organized as follows. We begin by revisiting the foundations of weight initialization and its role in DL in Section \ref{sec:background}. We then introduce the proposed initialization scheme, outlining its definition and theoretical motivations along Section \ref{sec:theory}. This is followed by an extensive empirical evaluation in Section \ref{sec:experiments} across diverse architectures and tasks. Finally, we conclude by summarizing our findings and outlining future research directions in Section \ref{sec:conclusion}.
\section{Background: Revisiting Initialization}\label{sec:background}

Effective weight initialization is deeply rooted in the principles of variance preservation, which aim to maintain stable signal propagation across layers. This stability is crucial to avoid the notorious problem of vanishing or exploding gradients, which hinder \ac{dnn} training \citep{hochreiter2001gradient}.

Formally, for a layer with weight matrix \( W \in \mathbb{R}^{m \times n} \), \cite{glorot2010understanding} proposed Glorot initialization, where each weight is sampled from a distribution (either uniform or normal) with zero mean and variance $\textup{Var}(W) = 2/(m+n)$, specifically chosen to balance the flow of information during both the forward and backward passes.

Subsequently, \cite{glorot2011deep} observed that \acp{relu} produce naturally sparse activations, typically leaving roughly $50\%$ of hidden units active (positive) and the other 50\% inactive (zero). They highlighted the significance of this balanced activation pattern for efficient gradient flow and effective learning. Recognizing this characteristic, \cite{he2015delving} refined this initialization specifically for networks using \ac{relu} activations, proposing the variance $\textup{Var}(W) = 2/n$ which analytically accounted for ensuring robust data propagation.

While these stochastic methods have been widely successful, their intrinsic randomness introduces significant variability. In particular, the random sampling procedure can sometimes produce weight matrices that lead to suboptimal neuron configurations in the hidden layers. This can cause slow convergence, instability, or even divergence early in training.

Various methods have attempted to reduce or structure this randomness. Orthogonal initialization~\citep{saxe2013exact}, for instance, ensures orthogonality of weight matrices, stabilizing signal propagation and improving gradient norms, but still involves random orthogonal matrix sampling. Layer-Sequential Unit-Variance (LSUV)~\citep{mishkin2015all} takes a step further by adjusting randomly initialized weights iteratively until achieving a precise activation variance. However, LSUV remains data-dependent and partially stochastic, limiting direct reproducibility and practical convenience.

\textcolor{black}{Building on this line of work, some recent approaches have explored fully deterministic alternatives to random initialization. \cite{blumenfeld2020beyond} state that \acp{dnn} can be trained from nearly all-zero, symmetric initializations, as long as some form of symmetry-breaking, such as dropout or hardware-level non-determinism, is present. \cite{zhao2022zero} propose a scheme based on identity and Hadamard transforms, which achieves benchmark performance on ResNet architectures while preserving signal propagation. While both methods demonstrate that randomness is not strictly necessary for successful training, they primarily aim to match the performance of standard random initializations in terms of final accuracy. Crucially, they do not report improvements in convergence speed, robustness across architectures, or theoretical insights into the limitations of randomness. These works play an important role in establishing deterministic initialization as a viable alternative, but remain early steps in the broader effort to understand and fully exploit its potential.}

A notable gap thus remains in the literature: Despite these advances, there is currently no widely adopted deterministic initialization method with demonstrated potential to surpass the performance of random initializations. This scarcely explored direction motivates our work, wherein we propose a fully deterministic alternative, explicitly designed to deliver improved accuracy and convergence speed across a wide range of architectures.
\section{Definition and Theoretical Foundation of Sinusoidal Initialization}\label{sec:theory}

In this section, we introduce a novel deterministic initialization scheme for \acp{dnn}, which we refer to as \textit{Sinusoidal} initialization. This approach is proposed as a theoretically-motivated alternative to conventional stochastic methods, aiming to enable efficient signal propagation and promote functional independence among neurons from the very first training step.

\subsection{Sinusoidal Initialization}

Let \( W \in \mathbb{R}^{m \times n} \) denote the weight matrix of a neural network layer, where \( n \) represents the number of input features and \( m \) the number of output neurons. We define the weights deterministically using sinusoidal functions as
\begin{equation}\label{eq:sin_init}
W[i,j] = a \cdot \sin(k_i \cdot x_j + \phi_i),
\end{equation}
where
\begin{equation*}
    k_i = 2\pi i, \, x_j = j/n ~~\textrm{and}~~\phi_i = 2\pi i/m,
\end{equation*}
for $i \in \{1,2, \dots, m\}$ and $ j \in \{1,2, \dots, n\}$. 

Under this formulation, each row \( W[i,:] \) corresponds to a uniformly sampled sinusoidal waveform of frequency \( k_i \) and phase offset \( \phi_i \), evaluated over an evenly spaced grid of \( n \) points in the interval \([0, 1]\). The linear increase in frequency with the row index ensures that the function \( f_i(x) = a \cdot \sin(k_i \cdot x + \phi_i) \) completes exactly \( i \) full oscillations. The phase offsets \( \phi_i \) are also chosen to be equally spaced to avoid identical first values across the rows of matrix \( W \).

The amplitude \( a \) is chosen according to theoretical foundations established by \cite{glorot2010understanding} and \cite{he2015delving}, which dictate the variance of weights in order to enable efficient signal propagation. Specifically, in our particular case, we adhere to the variance proposed by Glorot, setting the variance of $W$ to be $2/(m+n)$. If $v$ denotes the variance of $W$ when $a=1$, it then follows that $2/(m+n) = a^2 v$, which yields a natural way to determine the amplitude $a$.

Furthermore, we follow standard practice by initializing all neuron biases to zero, ensuring no preactivation offset is introduced at the start of training.

\subsubsection*{Weight Distribution} 
Unlike traditional random initializations, this \textit{Sinusoidal} scheme yields a non-Gaussian distribution of weights. As illustrated in Figure~\ref{fig:weight-distribution}, the weights follow an {\small \textsf{arcsine}} distribution, exhibiting higher density near the extrema \( \pm a \) and lower density near zero. This contrasts with the symmetric bell-shaped curves centered at zero typically produced by standard stochastic initialization methods. Additionally, Figure~\ref{fig:weight-visualization} provides a heatmap of matrix \( W \), showcasing the underlying harmonic structure induced by the \textit{Sinusoidal} construction. The structured and oscillatory pattern of each row is reminiscent of sinusoidal positional encodings\footnote{Positional Encodings were indeed the initial inspiration behind our initialization. The reader may note some differences, such as the use of phase shifts in the waves, the exclusive use of sine (as opposed to both sine and cosine), and the adoption of increasing rather than decreasing frequencies. Nonetheless, the underlying connection lies in the expressive power of harmonic functions to assign independent vectors, a fact long understood since Fourier.} employed in Transformer architectures~\citep{NIPS2017_3f5ee243}, thereby offering spatial diversity without relying on randomness.

\begin{figure}[ht!]
    \centering
    \vspace{-0.3cm}
    \hspace*{\fill}
    \begin{minipage}{0.43\textwidth}
        \centering
        \includegraphics[width=0.8\linewidth,trim={0.5cm 0 0.5cm 0.2cm},clip]{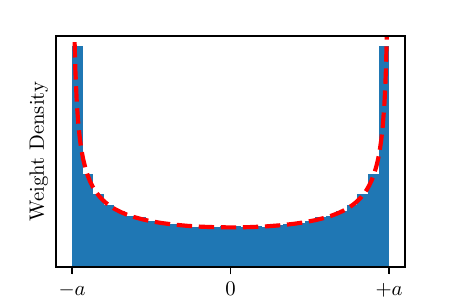}
        \vspace{-0.2cm}
        \caption{\centering Weight distribution under \hspace{\textwidth} \textit{Sinusoidal} initialization.}
        \label{fig:weight-distribution}
    \end{minipage}
    \hfill
    \begin{minipage}{0.43\textwidth}
        \centering
        \includegraphics[width=0.8\linewidth,trim={0.5cm 0 0.5cm 0.2cm},clip]{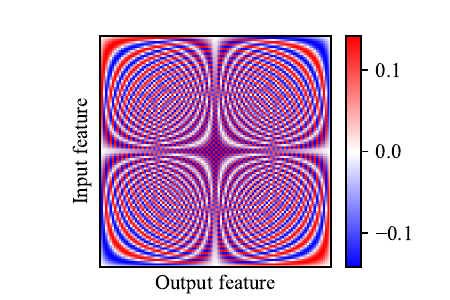}
        \vspace{-0.2cm}
        \caption{\centering Heatmap of the \hspace{\textwidth} initialized weight matrix.}
        \label{fig:weight-visualization}
    \end{minipage}
    \hspace*{\fill}
\end{figure}

This initialization strategy provides a well-conditioned starting point for early training stages, as will be further discussed in the next subsection.

\subsection{Theoretical Motivation}

This section provides a theoretical motivation for the design of the proposed initialization. We emphasize its ability to foster efficient training dynamics from the earliest stages. In particular, we analyze the impact that initialization induces on neuron activation patterns in the intermediate layers of \acp{dnn}. Furthermore, we introduce a quantitative framework to assess activation quality.

We develop the following theoretical analysis for \acp{dnn} employing \textit{ReLU-like} activations, which are functions such as \ac{gelu}, \ac{silu}, or \ac{prelu}, whose shapes resemble the classical rectified activation. In such networks, we consider a neuron to be \textit{active} if its output is positive, and \textit{inactive} otherwise. Unlike \ac{relu}, these activations may still propagate small negative values in the inactive state, resulting in a smoother transition between activation regimes.
 
Intuitively, a neuron is inefficient when its activation pattern is imbalanced—\textit{i.e.}, when it remains either active or inactive for a disproportionate amount of samples. Such behavior severely limits its representational power. We formalize this notion through the concept of a \textit{skewed neuron}:

\begin{definition}
Let \( Z \) be a random variable representing the output of a neuron with corresponding weights $W_1, W_2\ldots, W_n$, and let \( \alpha \in (0, 1/2) \). The neuron associated with \( Z \) is said to be \textit{skewed with degree} \( \alpha \) if
\[
| P(Z > 0 \mid W_1,W_2,\dots,W_n) - 1/2 | > \alpha.
\]
\end{definition}

This definition quantifies the deviation of a neuron's activation from the ideal $50\%-50\%$ balance. For instance, with \( \alpha = 0.3 \), a neuron is considered \textit{skewed} if the probability difference between activation and inactivation in absolute value exceeds the \( 80\%-20\% \) threshold.

A high prevalence of \textit{skewed} neurons impairs training efficiency by reducing the network’s capacity to exploit nonlinearities and undermining its expressive power, and hence impeding optimization and slowing down convergence.

\subsubsection*{Empirical evaluation of activation imbalance}\label{subsec:actimbalance}
To analyze how different initialization schemes affect activation balance, we conduct an experiment with the ViT B16 on the ImageNet-1k dataset. The goal is to compare the extent of activation imbalance across several common initialization strategies, such as \textit{Glorot}, \textit{He},  \textit{Orthogonal}, \textit{LSUV}, and our proposed \textit{Sinusoidal} initialization. Figure~\ref{fig:activation_state} visually illustrates neuron activation states of the last Feed-Forward of the ViT. Each plot maps neuron indices (horizontal axis) against $768$ randomly selected input samples (vertical axis), with white pixels indicating active (positive output) neurons and black pixels indicating inactive (negative output) ones.

\begin{figure}[ht]
\centering
    \hspace*{\fill}%
    \includegraphics[height=2.8cm,trim={0.2cm 0.3cm 0.2cm 0.2cm},clip]{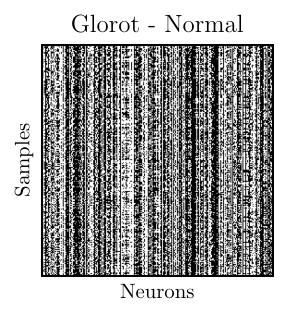}\hfill
    \includegraphics[height=2.8cm,trim={0.65cm 0.3cm 0.2cm 0.2cm},clip]{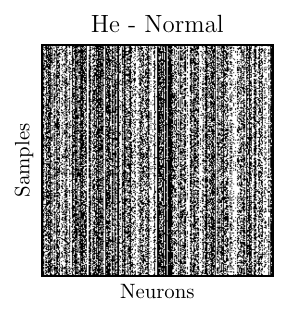}\hfill
    \includegraphics[height=2.8cm,trim={0.65cm 0.3cm 0.2cm 0.2cm},clip]{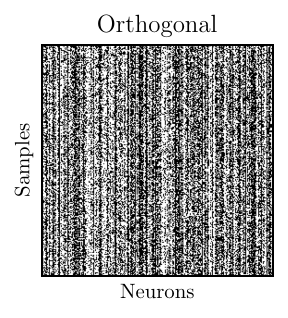}\hfill
    \includegraphics[height=2.8cm,trim={0.65cm 0.3cm 0.2cm 0.2cm},clip]{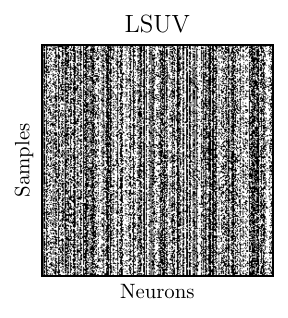}\hfill
    \includegraphics[height=2.8cm,trim={0.65cm 0.3cm 0.2cm 0.2cm},clip]{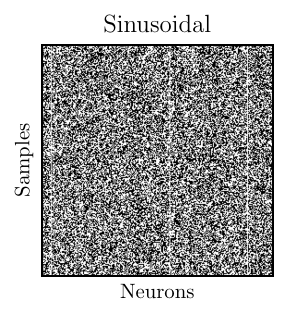}\hfill
    \hspace*{\fill}
    \vspace{-0.2cm}
\caption{Neuron activation states, in white active neurons ($>0$), for different initializations.}
\label{fig:activation_state}
\end{figure}

Traditional initializations exhibit distinct vertical white or black bands, indicating neurons that remain consistently active or inactive—\textit{i.e.}, \textit{skewed} neurons. As noted in Section~\ref{sec:background},~\cite{glorot2011deep} emphasized the importance of balanced \ac{relu} activations, with roughly $50\%$ of neurons active to ensure effective learning. \textit{Skewed} neurons disrupt this balance, limiting the network's expressive power by underutilizing its nonlinearity and pushing it toward more linear behavior. In contrast, the \textit{Sinusoidal} initialization yields a more irregular, noise-like pattern with no dominant columns, suggesting a more balanced and expressive activation regime.

To complement this visual analysis, we quantified the proportion of \textit{skewed} neurons under two thresholds for $\alpha$: $0.1$ and $0.3$. As shown in Table~\ref{table:skewed}, conventional initialization schemes result in an unexpectedly high proportion of \textit{skewed} neurons, whereas the \textit{Sinusoidal} approach barely exhibits any \textit{skewed} neuron. These results highlight the potential of \textit{Sinusoidal} initialization to better preserve activation diversity. For further experimental details, refer to Appendix~\ref{app:imbalance}.

\sisetup{table-format=2.1,round-precision = 1,round-mode = places,detect-weight=true}
\begin{table}[ht]
\centering
\caption{Percentage of \textit{skewed} neurons under different $\alpha$ thresholds and initialization schemes.}
\small
\label{table:skewed}
\begin{tabular}{c*{4}{S<{\,\%}}S[table-format=1.1,round-precision = 1]<{\,\%}}
\toprule
\multicolumn{1}{c}{\textbf{$\alpha$}} & \multicolumn{1}{c}{~~\textbf{Glorot}~~} & \multicolumn{1}{c}{\textbf{He}} & \multicolumn{1}{c}{\textbf{Orthogonal}} & \multicolumn{1}{c}{\textbf{LSUV}} & \multicolumn{1}{c}{\textbf{Sinusoidal}} \\
\midrule
$0.1$ & 82.7 & 82.8 & 84.0 & 84.3 & \bfseries 0.2 \\
$0.3$ & 51.9 & 51.4 & 49.6 & 50.9 & \bfseries 0.2 \\

\bottomrule
\end{tabular}
\end{table}

These findings reveal a strikingly high proportion of \textit{skewed} neurons under conventional random initializations, whereas the proposed deterministic \textit{Sinusoidal} initialization yields a significantly lower rate. This contrast naturally raises a critical question: \textit{What is the reason for this phenomenon?} The following subsection aims to shed light on the inner workings of random initializations, why they tend to produce \textit{skewed} neurons, and why the \textit{Sinusoidal} scheme successfully mitigates this issue.

\subsubsection*{Impact of randomness} \label{subsec:impact_of_randomness}

It is traditionally assumed that randomness in weight initialization is essential for effective training in \acp{dnn}. However, the results presented in this work challenge this long-standing belief, revealing that randomness may in fact seed unfavorable configurations from the very beginning.

In a standard random initialization, the weights \( W_1, W_2, \ldots, W_n \) of a neuron are independently sampled from a distribution centered at zero. We identify a critical statistic that governs the initial behavior of the neuron: the cumulative weight imbalance, quantified by the sum
\[
S = W_1 + W_2 + \cdots + W_n.
\]
This quantity captures the imbalance between positive and negative weights. Despite the zero-mean assumption, the value of \( S \) can deviate substantially from zero due to statistical fluctuation. For example, under the He initialization \citep{he2015delving}, where weights are generated with variance \( \sigma^2 = 2/n \), the Central Limit Theorem implies that \( S \sim \mathcal{N}(0,2) \). 

To empirically investigate the relationship between this statistic and the emergence of \textit{skewed} neurons, we take, for simplicity, a \ac{mlp} as the reference model on which to conduct our experiments. For full experimental details, see Appendix~\ref{app:imbalance}. When focusing solely on the first layer of the network, we observe a crystal-clear correlation between the two concepts.

To illustrate this relationship, Figure~\ref{fig:skewed-histogram} presents a histogram of $S$ values across all neurons in the first hidden layer. Blue bars correspond to neurons classified as \textit{skewed} at level $\alpha = 0.3$ (\textit{i.e.} with \textit{skewed} balance greater than $80\%-20\%$), while orange bars denote non-\textit{skewed} neurons. The plot reveals a striking pattern: neurons with large absolute values of $S$ are invariably \textit{skewed}, confirming a strong empirical correlation between $|S|$ and the degree of \textit{skewness}. This reinforces the central role of $|S|$ in determining activation imbalances at initialization.

\begin{figure}[h]
    \centering
    \hspace*{\fill}%
    \includegraphics[height=3.5cm,trim={0.2cm 0.2cm 0.25cm 0.2cm},clip]{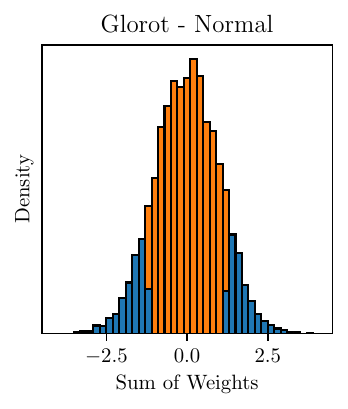}\hfill
    \includegraphics[height=3.5cm,trim={0.7cm 0.2cm 0.25cm 0.2cm},clip]{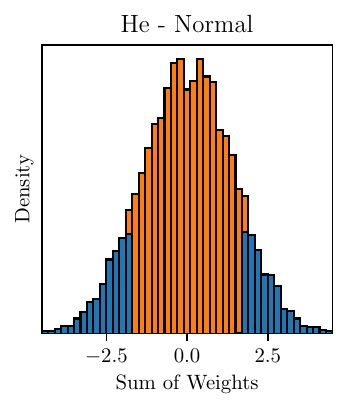}\hfill
    \includegraphics[height=3.5cm,trim={0.7cm 0.2cm 0.25cm 0.2cm},clip]{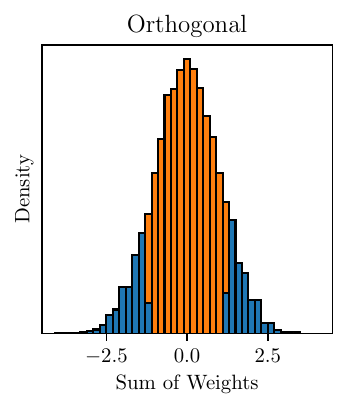}\hfill
    \includegraphics[height=3.5cm,trim={0.7cm 0.2cm 0.25cm 0.2cm},clip]{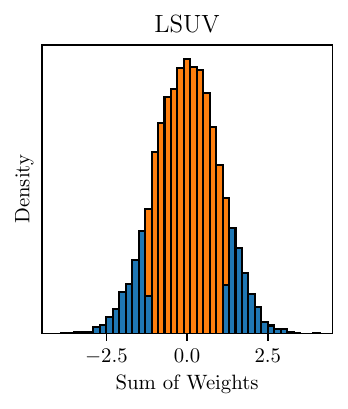}
    \includegraphics[height=3.5cm,trim={0.7cm 0.2cm 0.25cm 0.2cm},clip]{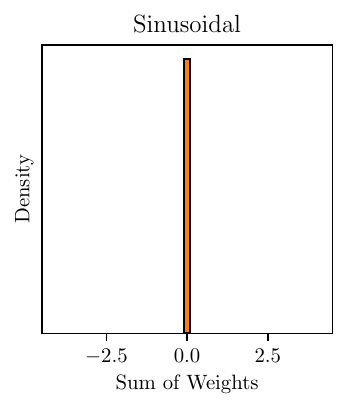} \\
    \includegraphics[width=0.4\linewidth,trim={0.3cm 0.3cm 0.3cm 0.3cm},clip]{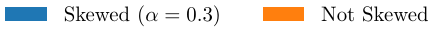}
    \vspace{-0.2cm}
    \caption{Histogram of the statistic \( S \) values, highlighting the link between large \( |S| \) and neuron \textit{skewness} at \( \alpha = 0.3 \).}
    \label{fig:skewed-histogram}
\end{figure}

This observation makes it evident that, at the first layer of the model, the relationship between \textit{skewed} neurons and the corresponding value of the statistic $S$ is remarkably precise: for each value of $\alpha$, there exists a unique threshold $\lambda$ such that a neuron is \textit{skewed} of degree $\alpha$ if and only if $|S|>\lambda$. This equivalence can be formally established, as stated in the following theorem.

\begin{theorem}[Threshold Equivalence]\label{thm:strong}
Let \(W_1,W_2,\dots,W_n\) and \(X_1,X_2,\dots,X_n\) be \textup{i.i.d.} sequences with
\[
\mathbb{E}[W_1]=0,\quad \mathbb{E}[X_1]=\mu > 0,\quad
\operatorname{Var}(W_1)=\theta^2 \in (0, \infty),\quad \operatorname{Var}(X_1)=\sigma^2<\infty.
\]
Define \(S_n = \sum_{i=1}^n W_i\) and \(Z_n = \sum_{i=1}^n W_i X_i\). Then for any \(\alpha \in (0, 1/2)\), define
\[
\lambda_n(\alpha) := (\theta \sigma \sqrt{n}  )\,/ \mu \cdot \Phi^{-1}\left(1/2 + \alpha\right) ,
\]
where \(\Phi^{-1}\) is the quantile function of the standard normal distribution. Then, as \(n \to \infty\), the following equivalence holds with probability tending to one:
\[
| P(Z_n > 0  \mid W_1,W_2,\dots,W_n) - 1/2 | > \alpha
\quad \text{ if and only if } \quad
|S_n| > \lambda_n(\alpha).
\]
\end{theorem}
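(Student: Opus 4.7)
The plan is to condition on the weight sequence $W_1,\dots,W_n$ and analyze $Z_n=\sum_i W_i X_i$ as a sum of conditionally independent random variables. Given $W$, each term $W_i X_i$ has conditional mean $\mu W_i$ and conditional variance $\sigma^2 W_i^2$, so the conditional mean and variance of $Z_n$ are $\mu S_n$ and $\sigma^2 \sum_i W_i^2$ respectively. Applying a conditional central limit theorem (Lindeberg or Berry--Esseen, depending on whether one wants only convergence or a quantitative rate) on the array $\{W_i X_i\}_{i=1}^{n}$ gives, with probability tending to one over the randomness in $W$,
\[
P\bigl(Z_n>0 \,\big|\, W_1,\dots,W_n\bigr) \;=\; \Phi\!\left(\frac{\mu\,S_n}{\sigma\sqrt{\sum_i W_i^2}}\right) + o(1).
\]

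Next I would invoke the strong law of large numbers to replace $\sum_i W_i^2$ by $n\theta^2$. Since $W_1,\dots,W_n$ are i.i.d.\ with variance $\theta^2>0$, we have $\tfrac{1}{n}\sum_i W_i^2 \to \theta^2$ almost surely, so $\sigma\sqrt{\sum_i W_i^2} = \sigma\theta\sqrt{n}(1+o(1))$ on an event of probability tending to one. Substituting, and defining the normalized statistic $T_n := \mu S_n/(\sigma\theta\sqrt{n})$, the conditional probability becomes $\Phi(T_n)+o(1)$. Because $\Phi$ is strictly increasing with $\Phi(0)=1/2$, the skewness condition $|P(Z_n>0\mid W)-1/2|>\alpha$ is, up to the $o(1)$ error, equivalent to $|T_n|>\Phi^{-1}(1/2+\alpha)$, which rearranges exactly to $|S_n|>\lambda_n(\alpha)$.

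The main obstacle is converting this asymptotic equivalence into the ``probability tending to one'' statement claimed in the theorem, since the equivalence can fail whenever $|T_n|$ falls in an $o(1)$-neighborhood of the threshold $\Phi^{-1}(1/2+\alpha)$. The argument here is that by the ordinary (unconditional) CLT applied to $S_n$, the sequence $T_n$ converges in distribution to a nondegenerate Gaussian $\mathcal{N}(0,\mu^2/\sigma^2)$, whose law is absolutely continuous. Hence, for any shrinking interval $[\Phi^{-1}(1/2+\alpha)-\varepsilon_n,\Phi^{-1}(1/2+\alpha)+\varepsilon_n]$ with $\varepsilon_n\to 0$, the probability that $|T_n|$ lies in it vanishes as $n\to\infty$. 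Combined with a Berry--Esseen rate of $O(1/\sqrt n)$ for the conditional CLT (valid once a third moment on $W_1X_1$ is granted, or via a Lindeberg truncation otherwise) and a concentration bound controlling the LLN fluctuation of $\sum W_i^2$, this pushes the boundary-region probability to zero and delivers the desired equivalence with probability tending to one.
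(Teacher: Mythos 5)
Your proposal follows essentially the same route as the paper's proof: condition on the weights, apply a Lindeberg--Feller CLT to get $P(Z_n>0\mid W_1,\dots,W_n)=\Phi\bigl(\mu S_n/(\sigma\sqrt{\textstyle\sum_i W_i^2})\bigr)+o(1)$, use the law of large numbers to replace $\sum_i W_i^2$ by $n\theta^2(1+o(1))$, and invoke strict monotonicity and symmetry of $\Phi$ to translate the skewness condition into $|S_n|>\lambda_n(\alpha)$. The only difference is that you spell out the anti-concentration step (the unconditional CLT for $S_n$, whose absolutely continuous limit rules out $|T_n|$ landing in a shrinking neighborhood of the threshold), a point the paper compresses into a single sentence; this is a more careful rendering of the same argument, not a different one.
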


This theoretical result establishes a tight connection between the statistic $S$ and the \textit{skewness} property of neurons. As a direct consequence, stochastic initializations obtained by sampling from any zero-mean distribution inevitably lead to skewed neurons with probability tending to one. Formally:

\begin{theorem}[Asymptotic skewness under random initialization]\label{thrm:corollary}
Consider a neuron with zero bias and input $X_1,\ldots,X_n$ and weights $W_1,\ldots,W_n$. Assume $\{W_i\}$ are i.i.d. with $\mathbb E[W_1]=0$ and $\operatorname{Var}(W_1)=\theta^2\in(0,\infty)$, and $\{X_i\}$ are i.i.d., independent of $\{W_i\}$, with $\mathbb E[X_1]=\mu>0$ and $\operatorname{Var}(X_1)=\sigma^2<\infty$. Let $Z_n=\sum_{i=1}^n W_iX_i$. Then for every deterministic sequence $\alpha_n\downarrow 0$, the probability of having 
\[
\bigl| P(Z_n>0\mid W_1,\ldots,W_n)-1/2 \bigr|>\alpha_n
\]
tends to one as $n\to\infty$. Equivalently, with probability tending to one there exists a nonzero skewness level at which the neuron is skewed.
\end{theorem}

Therefore, in order to design initializations that do not systematically induce skewed neurons, it is necessary to move beyond the classical family of random schemes that fall under the assumptions of Theorem~\ref{thrm:corollary}. This motivates the exploration of deterministic initializations specifically crafted to enforce balance from the outset. In the particular case of the \textit{Sinusoidal} initialization, the connection with Theorem \ref{thm:strong} becomes especially meaningful. Unlike standard random initialization schemes which offer no guarantees on the aggregate behavior of individual neurons, the \textit{Sinusoidal} initialization exhibits an inherent structural regularity. This symmetry manifests in the exact cancellation of weights along each row of the initialization matrix, ensuring that each neuron begins with a perfectly balanced contribution across its inputs. The following result makes this property precise:

\begin{theorem}[Row Cancellation]\label{thrm:s=0}
Let \( W \in \mathbb{R}^{m \times n} \) be defined as in Equation~\eqref{eq:sin_init} with $m<n$. Then, for every row index \( i \in \{1,2, \dots, m\} \), the entries along that row sum to zero:
\[
\sum_{j=1}^n W[i,j] = 0 .
\]
\end{theorem}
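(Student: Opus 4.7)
The plan is to reduce the row sum to a single closed-form trigonometric expression and show that it vanishes because the row index $i$ is a positive integer. After pulling out the amplitude $a$ (which plays no role) and substituting the specified $k_i = 2\pi i$, $x_j = j/n$, $\phi_i = 2\pi i/m$, the row sum becomes
\[
\sum_{j=1}^{n} W[i,j] \;=\; a \sum_{j=1}^{n} \sin\!\left(\phi_i + j \cdot \tfrac{2\pi i}{n}\right),
\]
which is a sine over values in arithmetic progression with common difference $d = 2\pi i / n$ and starting phase $\phi_i$.

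There are two equivalent ways I would finish the argument. The first is to write $\sin\theta = \operatorname{Im}(e^{\mathrm{j}\theta})$ (using $\mathrm{j}$ for the imaginary unit to avoid clashing with the row index), factor out $e^{\mathrm{j}\phi_i}$, and recognize $\sum_{j=1}^{n} \omega^{j}$ as a geometric series in $\omega = e^{2\pi \mathrm{j} i / n}$; since $i$ is an integer, $\omega^{n}=1$, so the sum telescopes to zero as long as $\omega \neq 1$, and taking the imaginary part yields the claim. The second is to apply the standard identity
\[
\sum_{j=1}^{n} \sin(a + jd) \;=\; \frac{\sin(nd/2)}{\sin(d/2)} \, \sin\!\left(a + \tfrac{(n+1)d}{2}\right),
\]
and note that $nd/2 = \pi i$, so the numerator $\sin(\pi i)$ is zero for every integer $i$. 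Either route closes the argument in one line.

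The only real obstacle is the degenerate case in which the common ratio $\omega$ equals $1$ (equivalently, $\sin(d/2) = \sin(\pi i / n) = 0$), i.e.\ when $n$ divides $i$. In that situation all $n$ entries of row $i$ collapse to the constant $a\sin(\phi_i)$, and the row sum is $n\,a\sin(2\pi i/m)$ rather than $0$ in general. In the regime of interest for network initialization one has $1 \le i < n$, so this case does not arise; I would either make this explicit as a hypothesis or remark that the theorem is understood in the non-resonant regime $i \not\equiv 0 \pmod{n}$. Apart from that caveat, the proof is a direct application of a single summation identity.
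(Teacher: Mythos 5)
Your argument is essentially the paper's own proof: the authors likewise reduce the row sum to the geometric series $\sum_{j=1}^{n}\omega^{j}$ with $\omega=e^{2\pi\mathrm{i}\,i/n}$ (after splitting off the phase via the angle-addition formula) and conclude from $\omega^{n}=1$ that it vanishes. Your extra caveat about the resonant case $\omega=1$ (i.e.\ $n\mid i$, which can occur since $i$ ranges up to $m$ and $m$ may exceed $n$) is in fact a point the paper's proof silently glosses over --- its formula $\omega(1-\omega^{n})/(1-\omega)$ is invalid there and the row sum is then $n\,a\sin(2\pi i/m)$, generally nonzero --- so flagging the non-resonant hypothesis is a genuine improvement rather than a gap in your proof.
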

The proofs and further theoretical analysis can be found in Appendix~\ref{app:proof}.

Consequently, if the objective is to initialize the weights without introducing any biased (skewed) neurons, then, at least in the first layer, we should aim to enforce $S=0$ for every neuron.

For deeper layers though, the correlation between neuron \textit{skewness} and the magnitude of $S$ progressively fades due to the asymmetries introduced by preceding layers in the established stochastic initializations. Indeed, the input to an intermediate layer is already biased, and the mean of each input component can deviate more and more, gradually decoupling the notions of \textit{skewness} and $|S|$. This issue, however, is naturally mitigated by the \textit{Sinusoidal} initialization. Owing to its strong symmetry, which ensures $S=0$, it guarantees that each neuron begins with an input that is perfectly centered, thereby eliminating any intrinsic directional bias. As a result, the \textit{Sinusoidal} initialization is theoretically guaranteed to prevent the emergence of \textit{skewed} neurons, an undesirable behavior frequently observed under stochastic schemes. This conclusion is not only supported by the theorems presented above but also empirically validated in Table~\ref{table:skewed}, where the proportion of \textit{skewed} neurons is identically zero across all thresholds tested.

The relevance of this symmetry during training will be quantitatively assessed in Section~\ref{sec:experiments}. However, before addressing training dynamics, it is crucial to examine a second foundational property: whether individual neurons are initialized with sufficiently diverse and independent functional responses. 

\subsubsection*{Functional independence across neurons}\label{subsec:functionalindependence}

Another critical consideration is the functional independence among neurons in a layer. An optimal initialization scheme should ensure that each neuron focuses on distinct aspects of the input, thereby enhancing representational efficiency.

To detect potential correlations among neurons, we examine horizontal patterns in Figure~\ref{fig:activation_state}. A dark or white horizontal band would indicate that all neurons respond similarly to a specific input sample, signaling redundancy. As observed, none of the schemes, including the \textit{Sinusoidal} one, exhibit dominant horizontal patterns. In our case, functional independence is enforced through distinct frequencies \( k_i \) and phase offsets \( \phi_i \) for each neuron, evoking the orthogonality of Fourier bases.

In order to further look into more intrinsic horizontal dependencies, the \ac{oui}, as introduced by the authors in ~\cite{OUI}, can be employed to numerically assess functional independence across neurons. This indicator measures the similarity between activation patterns of sample pairs within a given layer, assigning values in the range \([0,1]\), where 0 indicates highly correlated activations and 1 denotes complete independence. In this context, the activation pattern of a sample is understood as a binary vector, where each entry corresponds to the on/off activation state of a hidden neuron. Hence, OUI quantifies how differently pairs of samples activate the network, providing a direct measure of expressive capacity at initialization.


While all methods yield reasonably high \ac{oui} scores—Glorot $(0.56)$, He $(0.59)$, Orthogonal $(0.57)$, and LSUV $(0.58)$—the \textit{Sinusoidal} initialization consistently achieves the highest value, reaching an outstanding $0.98$. This indicates that, although traditional initializations already promote a significant degree of activation diversity, the \textit{Sinusoidal} scheme provides a strictly better starting point with respect to this metric, offering a more expressive and balanced activation regime from the very beginning.

\subsubsection*{Theoretical conclusions}

The findings of this section lead to two key insights regarding initialization strategies for \acp{dnn}:

\begin{enumerate}
    \item Random initializations naturally induce a high proportion of \textit{skewed} neurons, which can compromise training efficiency during early stages.
    \item Deterministic initializations can substantially mitigate this issue, provided they are designed with a structure that incorporates symmetry, functional independence, and variance control.
\end{enumerate}

From this perspective, the proposed \textit{Sinusoidal} initialization emerges as a strong candidate: it combines a bounded function family with inherent symmetry, sufficient expressiveness to ensure neuronal independence, and an analytically tractable formulation. Together, these properties make it a promising alternative for establishing a new standard in the initialization of \acp{dnn}.

\section{Experimental Benchmarking of Sinusoidal initialization}\label{sec:experiments}

Having introduced the \textit{Sinusoidal} initialization, a novel deterministic scheme theoretically motivated, we now turn to its empirical validation against both classical and \ac{sota} initialization methods across a range of \ac{dnn} configurations.

Our experimental validation specifically addresses two main hypotheses: 1) \textbf{Accuracy improvement}: The \textit{Sinusoidal} initialization achieves higher final validation accuracy compared to alternative initialization schemes in a significant proportion of cases. 2) \textbf{Accelerated convergence}: The \textit{Sinusoidal} initialization converges faster than any other initialization considered, as measured by the \ac{auc} metric, which besides for classification evaluation, can be used to compare the training convergence speeds~\citep{huang2005using}.

To evaluate these hypotheses rigorously, we conducted experiments involving five diverse configurations of \acp{dnn} and datasets: ResNet-50~\citep{he2015deepresiduallearningimage} trained on CIFAR-100~\cite{krizhevsky2009learning}, MobileNetV3~\citep{howard2019searchingmobilenetv3} trained on Tiny-ImageNet~\cite{le2015tiny}, EfficientNetV2~\citep{tan2021efficientnetv2smallermodelsfaster} trained on Tiny-ImageNet, ViT-B16~\citep{dosovitskiy2021imageworth16x16words} trained on ImageNet-1K~\cite{ILSVRC15}), and BERT-mini~\citep{devlin2019bertpretrainingdeepbidirectional} trained on WikiText~\cite{wikitext}. For the first three configurations, we utilized SGD, Adam, and AdamW optimizers, while ViT-B16 and BERT-mini were trained exclusively with SGD and AdamW, respectively, due to the inability of other optimizers to achieve effective training. To ensure a fair comparison between training processes, learning rate schedulers were not used, which may affect the comparability of final accuracies with benchmarks focused on maximizing accuracy.

We trained each configuration fully from scratch using four initialization methods: Default (variants of the He initialization detailed in Table \ref{tab:defaults}), Orthogonal (Orth.), LSUV, and our proposed \textit{Sinusoidal} initialization (Sin.). The exact training procedures, including hyperparameter settings and implementation details, are provided in Appendix~\ref{app:benchmark}. We measured the maximum validation accuracy achieved at epoch 1, epoch 10, and overall, as well as the \ac{auc} for every training scenario. 

\begin{table}[ht]
  \centering
  \caption{Default initializations for each model.}
  \label{tab:defaults}
  \resizebox{0.5\textwidth}{!}{%
  \begin{tabular}{cccc}
    \toprule
    \textbf{Model} & \textbf{Layer type} & \textbf{Default initialization} & \textbf{Activation type}\\
    \midrule
    \multirow{2}{*}{ResNet-50} & Conv2D & He normal & \multirow{2}{*}{ReLU} \\
                   & Linear & He uniform & \\\addlinespace
    \multirow{2}{*}{MobileNetV3} & Conv2D & He normal & \multirow{2}{*}{HardSwish} \\
                   & Linear & Normal & \\\addlinespace
    \multirow{2}{*}{EfficientNetV2} & Conv2D & He normal & \multirow{2}{*}{SiLU} \\
                   & Linear & Uniform \\\addlinespace
    \multirow{2}{*}{ViT-B16} & Conv2D & Truncated normal & ReLU \\
                   & Linear & Glorot uniform & GeLU \\\addlinespace
    BERT-mini      & Linear & Normal & GeLU\\
    \bottomrule
  \end{tabular}
  }
\end{table}

In all our setups, to fairly compare initializations using \ac{auc} across different runs, we ensure that all models are trained for the same number of epochs. For each combination of model, dataset, and optimizer, training is performed with early stopping to guarantee convergence for every initialization. Once the slowest run has converged, the remaining models continue training for the same number of additional epochs until reaching the maximum epoch count. This procedure ensures that (1) all trainings reach their maximum validation accuracy, and (2) the number of epochs used to calculate the \ac{auc} for measuring convergence speed is consistent across initializations. The detailed experimental outcomes are summarized in Table \ref{tab:accuracy} and illustrated in Figure \ref{fig:training_curves}. 

\sisetup{table-format=2.1,round-precision = 1,round-mode = places,detect-weight=true,table-column-width=0.74cm}
\newcommand\NA{\text{--}}
\begin{table}[ht]
  \centering
  \caption{Validation accuracy (\%) after 1 epoch, 10 epochs, and maximum accuracy, as well the \ac{auc} for each optimizer and initialization scheme (Default, Orthogonal, LSUV, \textit{Sinusoidal}).}
  \label{tab:accuracy}
  \resizebox{\textwidth}{!}{%
    \begin{tabular}{cc*{12}{S}*{4}{S[table-format=2.0,round-precision = 0,table-column-width=0.54cm]}}
      \toprule
      \multirow{2.5}{*}{\makecell{\textbf{Model /}\\\textbf{Dataset}}} &
      \multirow{2.5}{*}{\textbf{Optim.}} &
      \multicolumn{4}{c}{\textbf{1 epoch accuracy (\%)}} &
      \multicolumn{4}{c}{\textbf{10 epoch accuracy (\%)}} &
      \multicolumn{4}{c}{\textbf{Maximum accuracy (\%)}} &
      \multicolumn{4}{c}{\textbf{AUC}}\\
      \cmidrule(lr){3-6}\cmidrule(lr){7-10}\cmidrule(lr){11-14}\cmidrule(lr){15-18}
       & & \multicolumn{1}{c}{Def.} & \multicolumn{1}{c}{Orth.} & \multicolumn{1}{c}{LSUV} & \multicolumn{1}{c}{\textbf{Sin.}}
         & \multicolumn{1}{c}{Def.} & \multicolumn{1}{c}{Orth.} & \multicolumn{1}{c}{LSUV} & \multicolumn{1}{c}{\textbf{Sin.}}
         & \multicolumn{1}{c}{Def.} & \multicolumn{1}{c}{Orth.} & \multicolumn{1}{c}{LSUV} & \multicolumn{1}{c}{\textbf{Sin.}} 
         & \multicolumn{1}{c}{Def.} & \multicolumn{1}{c}{Orth.} & \multicolumn{1}{c}{LSUV} & \multicolumn{1}{c}{\textbf{Sin.}} \\
      \midrule
      \multirow{3}{*}{\makecell[t]{ResNet‑50\\CIFAR‑100}}
& SGD    &  2.6 &  4.0 &  3.6 & \bfseries  5.0 &  9.7 & 18.2 & 14.8 & \bfseries 24.3 & 37.3 & 46.5 & 44.8 & \bfseries 51.9 & 24.7 & 35.1 & 30.9 & \bfseries 41.8 \\ 
& Adam   & 10.1 & 12.5 & 11.8 & \bfseries 19.9 & 39.9 & 44.1 & 43.1 & \bfseries 48.7 & 53.1 & 56.6 & 56.3 & \bfseries 61.5 & 47.9 & 51.2 & 50.9 & \bfseries 56.5 \\ 
& AdamW  & 12.9 & 13.9 & 12.6 & \bfseries 23.1 & 58.5 & 60.9 & 62.5 & \bfseries 63.7 & 67.5 & 67.7 & 69.7 & \bfseries 71.0 & 63.6 & 64.6 & 65.5 & \bfseries 67.9 \\ 
      \addlinespace
      \multirow{3}{*}{\makecell[t]{MobileNetV3\\TinyImageNet}}
& SGD    &  0.7 &  0.8 & \bfseries  1.4 &  1.2 &  1.5 &  3.0 &  2.9 & \bfseries  4.5 & 18.4 & 25.8 & \bfseries 28.0 & 21.6 & 26.0 & \bfseries 38.1 & 34.8 & 36.0 \\ 
& Adam   &  5.0 & \bfseries  8.4 &  5.4 &  5.5 & 22.0 & 24.8 & \bfseries 26.7 & 24.0 & 32.8 & 34.4 & \bfseries 35.2 & 34.8 & 61.9 & 66.1 & \bfseries 71.3 & 64.7 \\ 
& AdamW  & 13.4 & 14.5 & \bfseries 14.5 & 12.3 & 40.9 & \bfseries 43.2 & 40.1 & 42.1 & 40.9 & \bfseries 43.6 & 40.1 & 42.6 & 79.1 & \bfseries 84.7 & 76.0 & 82.0 \\ 
      \addlinespace
      \multirow{3}{*}{\makecell[t]{EfficientNetV2\\TinyImageNet}}
& SGD    &  1.0 &  1.2 & \bfseries  1.9 &  1.0 &  5.0 &  5.3 & \bfseries 13.4 &  9.0 & 28.1 & 30.9 & \bfseries 32.1 & 32.0 & 46.7 & 51.9 & \bfseries 56.4 & \bfseries 55.6 \\ 
& Adam   &  2.9 &  4.2 &  5.6 & \bfseries  7.2 & 19.3 & 20.2 & 23.0 & \bfseries 26.4 & 27.7 & 29.8 & 32.7 & \bfseries 36.6 & 52.9 & 56.1 & 62.4 & \bfseries 69.5 \\ 
& AdamW  &  9.6 &  9.3 & \bfseries 11.3 &  9.4 & 48.1 & 48.2 & 48.7 & \bfseries 51.0 & 50.0 & 50.2 & 49.3 & \bfseries 53.5 & 100.1 & 100.3 & 99.7 & \bfseries 106.4 \\ 
      \addlinespace
      \makecell{ViT‑16\\ImageNet‑1k}
& SGD    &  2.8 & \bfseries  3.8 &  3.5 &  2.6 & 15.2 & \bfseries 15.2 & 14.1 & 13.9 & 28.6 & 28.2 & 29.6 & \bfseries 31.5 & 23.3 & \bfseries 25.2 & 24.1 & \bfseries 25.4 \\ 
      \addlinespace
      \makecell{BERT‑mini\\WikiText}
& AdamW  &  9.4 &  6.3 & \bfseries 11.0 &  6.3 & 13.6 & 10.9 & 14.8 & \bfseries 17.0 & 40.4 & \bfseries 42.2 & 15.9 & 41.1 & 58.2 & \bfseries 71.6 & 32.2 & \bfseries 72.0 \\      
\bottomrule    
    \end{tabular}}%
\end{table}

\begin{figure}[!ht]
\centering
    \vspace{-0.3cm}
    \hspace*{\fill}
    \includegraphics[height=4cm,trim={0.1cm 0.2cm 0.8cm 1.0cm},clip]{figures/acc_4inits_resnet-50_cifar-100.pdf}\hfill
    \includegraphics[height=4cm,trim={0.1cm 0.2cm 0.8cm 1.0cm},clip]{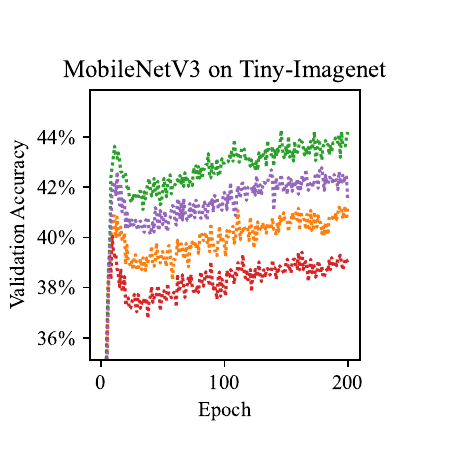}\hfill
    \includegraphics[height=4cm,trim={0.1cm 0.2cm 0.8cm 1.0cm},clip]{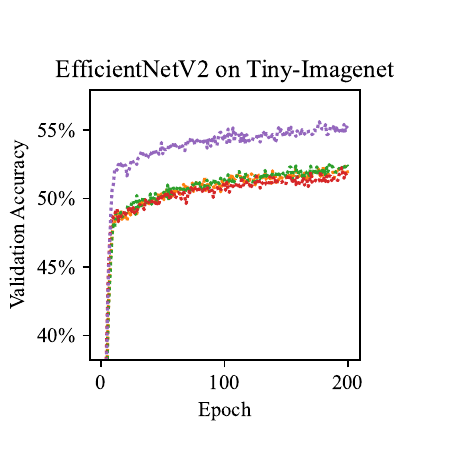}
    \hspace*{\fill}
    \\[-0.1cm]
    \hspace*{\fill}
    \includegraphics[height=4cm,trim={0.1cm 0.2cm 0.8cm 1.0cm},clip]{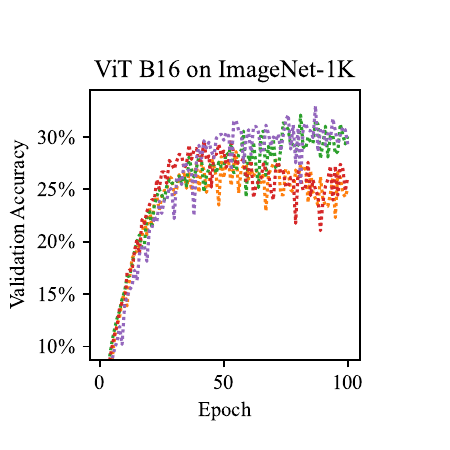}\hspace*{1cm}
    \includegraphics[height=4cm,trim={0.1cm 0.2cm 0.8cm 1.0cm},clip]{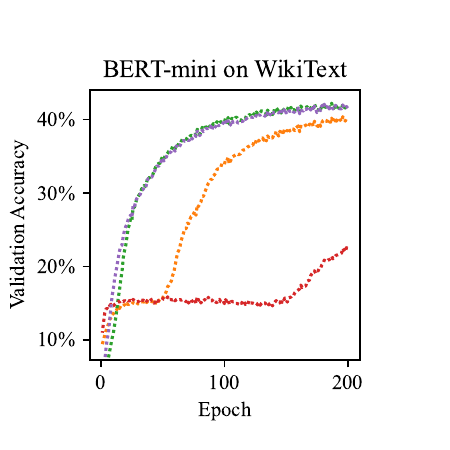}
    \hspace*{\fill}
    \\[-0.1cm]
    \hspace*{\fill}
    \includegraphics[width=0.7\textwidth,trim={0.3cm 0.3cm 0.3cm 0.3cm},clip]{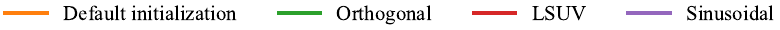}
    \hspace*{\fill}
    \vspace{-0.2cm}
\caption{Training curves showing that \textit{Sinusoidal} initialization leads to faster convergence compared to default, orthogonal, and LSUV initializations across architectures, using the optimizer that achieved the highest accuracy (SGD for ViT, AdamW for the others).}\vspace{-0.3cm}
\label{fig:training_curves}
\end{figure}

Analyzing these tables, several clear trends emerge. Firstly, our \textit{Sinusoidal} initialization consistently achieves higher final validation accuracy compared to other initializations, supporting our first hypothesis. Notably, in the case of MobileNetV3 on TinyImageNet, Orthogonal initialization slightly outperforms our method. Despite this isolated exception, the general trend strongly favors our \textit{Sinusoidal} initialization. This particular case, however, highlights that this research field remains rich with open avenues for exploration. We conducted an additional experiment replacing the HardSwish activation in MobileNetV3 with a ReLU, and under this setting, Sinusoidal once again outperformed all other initialization strategies, consistent with the rest of our results. This suggests that the observed behavior stems from the interaction between the HardSwish activation and the initialization scheme. This particular case, however, highlights that this research field remains rich with open avenues for exploration. The behavior observed in this architecture offers a valuable opportunity to further advance our understanding of the \ac{dnn} training process and refine \textit{Sinusoidal} initialization strategy accordingly.

Secondly, regarding convergence speed, the \textit{Sinusoidal} initialization outperforms every other initialization in \ac{auc}, except in MobileNet, which indicates a faster convergence and higher accuracy in average over the training process. It also consistently surpasses all other methods in terms of validation accuracy at early stages of training, specifically at epochs 1 and 10, unequivocally confirming our second hypothesis.

Integrating these observations into high-level metrics, our experiments collectively yield an average increase in final validation accuracy of $4.9\%$ over the default initialization, $1.7\%$ over orthogonal and $3.4\%$ over LSUV. Considering the overall training process, the \ac{auc} indicates an average improvement of $20.9\%$ over the default initialization, $5.7\%$ over orthogonal and $18.0\%$ over LSUV.

These results strongly support the theoretical predictions from the previous section and underline the practical advantages of the proposed \textit{Sinusoidal} initialization. Such significant and consistent gains in accuracy and convergence speed emphasize the potential of deterministic, symmetry-driven initialization methods as effective tools for enhancing the performance and efficiency of \ac{dnn} training.
\section{Conclusion}\label{sec:conclusion}

In this article, we introduced and extensively studied the \textit{Sinusoidal} initialization, a novel deterministic approach to initializing \acp{dnn}. Our method fundamentally departs from the traditional paradigm of stochastic initializations, employing structured \textit{Sinusoidal} patterns designed to enforce symmetry, functional independence, and precise variance control from the outset.

Through rigorous theoretical analysis, we demonstrate that conventional random initialization schemes inherently introduce asymmetries and imbalances, resulting in a high prevalence of what we define as \textit{skewed} neurons, a notion we formalize and quantify in this work. This \textit{skewness} substantially undermines neuron activation efficiency and may restrict the expressiveness of the network. In contrast, the deterministic structure of the \textit{Sinusoidal} initialization is theoretically proven to inherently mitigate these effects. 

Empirical evidence strongly supports our theoretical claims. Extensive experiments across diverse \ac{dnn} architectures, clearly indicate the superiority of \textit{Sinusoidal} initialization. Specifically, this approach consistently achieved higher final validation accuracy and accelerated convergence rates compared to \ac{sota} stochastic initialization methods. Our results quantify an average accuracy improvement over the default initialization of approximately $4.9\%$ and a notable boost in convergence speed of around $20.9\%$, underscoring the method's robustness and versatility.

In conclusion, our findings challenge the longstanding assumption that randomness is essential for effective model initialization, presenting strong theoretical and empirical arguments in favor of deterministic methods. The \textit{Sinusoidal} initialization represents a significant breakthrough in \ac{dnn} initialization practices, paving the way for future explorations into deterministic initialization schemes, potentially redefining standard practices in the design and training of \acp{dnn}.

\newpage
\begin{ack}
This research was funded by the projects PID2023-146569NB-C21 and PID2023-146569NB-C22 supported by MICIU/AEI/10.13039/501100011033 and ERDF/UE. Alberto Fernández-Hernández was supported by the predoctoral grant PREP2023-001826 supported by MICIU/AEI/10.13039/501100011033 and ESF+. Jose I. Mestre was supported by the predoctoral grant ACIF/2021/281 of the \emph{Generalitat Valenciana}. Manuel F. Dolz was supported by the Plan Gen--T grant CIDEXG/2022/013 of the \emph{Generalitat Valenciana}.
\end{ack}

\bibliography{biblio}
\bibliographystyle{plainnat}

\newpage
\appendix
\section{Limitations and Societal Impact}\label{app:limitations}

This work opens the door to exploring deterministic initializations that exploit symmetry to endow neural networks with desirable properties beyond the reach of stochastic schemes. Among the possible designs, we focused on the \textit{Sinusoidal} initialization due to its natural structure and analytical tractability. Nevertheless, the broader space of structured deterministic initializations remains scarcely explored, offering a fertile avenue for future research.

In terms of empirical evaluation, while our method consistently outperforms alternatives across most settings, results on MobileNetV3 fall slightly behind those of Orthogonal initialization. Understanding why this architecture deviates from the general trend, despite its apparent similarity to EfficientNetV2, could reveal important insights and guide further refinements. We regard this as a localized irregularity rather than a fundamental shortcoming.

Finally, although we have a clear theoretical understanding of how \textit{Sinusoidal} initialization guarantees the absence of \textit{skewed} neurons in \acp{mlp} regardless of depth, and unlike traditional stochastic methods, further research is needed to fully characterize this behavior across a broader range of architectures. In particular, establishing a general theoretical framework that ensures the absence of \textit{skewed} neurons in all established architectures remains an open and promising direction for future work.

Regarding the societal impact, the proposed \textit{Sinusoidal} initialization enables faster convergence during training, reducing the number of epochs required to reach target performance. This directly translates into lower computational demands, which in turn reduces both the economic cost and the environmental footprint of training \acp{dnn}.

Given that a substantial portion of the energy consumed in large-scale machine learning pipelines originates from high-performance computing centers, many of which still rely on carbon-emitting energy sources, the adoption of more efficient initialization strategies could contribute to a reduction in overall CO\textsubscript{2} emissions. While the exact magnitude of this impact depends on deployment scale and infrastructure specifics, our results highlight a concrete opportunity to align algorithmic efficiency with broader sustainability goals.

\section{Theoretical details}\label{app:proof}
This appendix provides the theoretical foundations underlying the results stated in Section~\ref{sec:theory}, and in particular formal proofs of Theorems~\ref{thm:strong}, \ref{thrm:corollary} and ~\ref{thrm:s=0}. We also present a general version of Theorem~\ref{thm:strong}, extending the simplified setting used in the main text to a more flexible framework that covers non-identically distributed inputs and milder moment assumptions.

Beyond these formal results, we include further technical insights that clarify the connection between random initializations and the emergent activation patterns observed in the early layers of deep neural networks.

\subsection{Threshold equivalence}

In Theorem~\ref{thm:strong} we stated, under strong i.i.d.\ assumptions, that the weighted sum \(Z_n = \sum W_i X_i\) is controlled with high precision by the magnitude of the total weight \(S_n = \sum W_i\). Specifically, for each level \(\alpha \in (0, 1/2)\), there exists a sharp threshold \(\lambda_n(\alpha)\) such that the probability \(P(Z_n > 0\, |\,W_1, \ldots, W_n)\) deviates from \(1/2\) by more than \(\alpha\) if and only if \(|S_n| > \lambda_n(\alpha)\), with high probability.

In this subsection, we show that this result admits a broader formulation under significantly weaker assumptions. In particular, the data \(X_1,\dots,X_n\) need not be identically distributed, and only mild regularity is required. The result below extends Theorem~\ref{thm:strong}, and justifies its asymptotic equivalence through general concentration and normalization arguments.

\begin{theorem}[Threshold Equivalence, general version]\label{thm:weak}
Let \((W_1, X_1), \dots, (W_n, X_n)\) be independent pairs of real random variables, where the weights \(W_1, \dots, W_n\) are i.i.d., independent of the data \(X_1, \dots, X_n\), and satisfy
\[
\mathbb{E}[W_1] = 0, \quad \operatorname{Var}(W_1) = \theta^2 \in (0, \infty).
\]
Assume that the data \(X_1, \dots, X_n\) are independent with a common mean \(\mu \neq 0\), variances \(\operatorname{Var}(X_i) = \sigma_i^2 \in (0, \infty)\), and that
\[
\frac{1}{n} \sum_{i=1}^n \sigma_i^2 \longrightarrow \bar{\sigma}^2 \in (0, \infty), \quad \text{as } n \to \infty.
\]
Finally, assume the conditional Lindeberg condition,
\[
\frac{1}{\sum_{i=1}^{n} W_i^2 \sigma_i^2}
\sum_{i=1}^{n} \mathbb{E}\left[ W_i^2 (X_i - \mu)^2 \mathbf{1}_{\{|X_i - \mu| > \varepsilon \sqrt{\sum W_i^2 \sigma_i^2}\}} \right] \longrightarrow 0 \quad \text{for all } \varepsilon > 0,
\]
holds almost surely. Define \(S_n = \sum_{i=1}^n W_i\) and \(Z_n = \sum_{i=1}^n W_i X_i\). Then for any \(\alpha \in (0, 1/2)\), define
\[
\lambda_n(\alpha) := (\theta \sqrt{\bar{\sigma}^2} \sqrt{n}) \,/\, \mu \cdot \Phi^{-1}\left( 1/2 + \alpha \right),
\]
where \(\Phi^{-1}\) is the quantile function of the standard normal distribution. Then, as \(n \to \infty\), the following equivalence holds with probability tending to one:
\[
\left| P(Z_n > 0 \mid W_1, \dots, W_n) - 1/2 \right| > \alpha
\quad \text{ if and only if } \quad
|S_n| > \lambda_n(\alpha).
\]
\end{theorem}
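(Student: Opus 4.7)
The plan is to work conditionally on the weights $W_1,\dots,W_n$. Given the weights, $Z_n = \sum_{i=1}^n W_i X_i$ is a sum of independent (not identically distributed) random variables with conditional mean $\mu S_n$ and conditional variance $V_n^2 := \sum_{i=1}^n W_i^2 \sigma_i^2$. The stated hypothesis is precisely the Lindeberg condition for the triangular array $\{W_i(X_i-\mu)\}_i$ conditional on the weights, and since it holds almost surely in the weights, the Lindeberg--Feller CLT yields, for almost every realization of $W_1,W_2,\dots$,
\[
P(Z_n > 0 \mid W_1,\dots,W_n) \;=\; \Phi\!\left(\frac{\mu S_n}{V_n}\right) + o(1).
\]

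Next, I would normalize $V_n$ to expose the explicit threshold $\lambda_n(\alpha)$. By the weak law of large numbers applied to the independent sequence $W_i^2 \sigma_i^2$, combined with the Cesàro assumption $n^{-1}\sum \sigma_i^2 \to \bar{\sigma}^2$, one obtains $V_n^2/n \to \theta^2 \bar{\sigma}^2$ in probability (a truncation argument handles the absence of higher moments on $W_1$). Substituting into the conditional CLT and using continuity of $\Phi$ gives
\[
P(Z_n > 0 \mid W_1,\dots,W_n) - \tfrac{1}{2} \;=\; \Phi\!\left(\frac{\mu S_n}{\theta \bar{\sigma}\sqrt{n}}\right) - \tfrac{1}{2} + o_P(1).
\]

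Because $\Phi$ is strictly increasing and continuous with $\Phi(0)=\tfrac{1}{2}$, the inequality $|P(Z_n>0\mid W_1,\dots,W_n) - \tfrac{1}{2}| > \alpha$ is, up to the $o_P(1)$ remainder, equivalent to $|\mu S_n|/(\theta \bar{\sigma}\sqrt{n}) > \Phi^{-1}(\tfrac{1}{2}+\alpha)$, that is, $|S_n| > \lambda_n(\alpha)$. To promote this approximate equivalence to a genuine equivalence on a set of probability tending to one, I would pick a sequence $\delta_n \downarrow 0$ that dominates the $o_P(1)$ error and sandwich the two events between $|S_n| > \lambda_n(\alpha) \pm \delta_n \sqrt{n}$. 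The discrepancy set then lies inside a boundary strip on which $\mu S_n/(\theta\bar{\sigma}\sqrt{n})$ falls within $\delta_n$ of $\Phi^{-1}(\tfrac{1}{2}+\alpha)$, and by the ordinary CLT $S_n/\sqrt{n} \Rightarrow \mathcal{N}(0,\theta^2)$ the probability of this strip is controlled by the mass of a Gaussian density on a shrinking interval, hence vanishes.

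The main obstacle is coordinating the two layers of approximation: the Lindeberg--Feller remainder and the variance normalization $V_n^2/n$ both depend on the random weights, so one must verify that on a single event of probability tending to one, both errors are uniformly smaller than $\delta_n$, while $\delta_n$ itself is still negligible compared to the Gaussian anti-concentration scale around $\Phi^{-1}(\tfrac{1}{2}+\alpha)$. Selecting $\delta_n$ by a diagonal procedure---slowly enough to absorb both approximations, fast enough to keep the boundary strip small---closes the argument and simultaneously recovers the i.i.d.\ version (Theorem~\ref{thm:strong}) as the special case $\sigma_i^2 \equiv \sigma^2$.
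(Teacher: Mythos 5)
Your proposal follows essentially the same route as the paper's proof: a conditional Lindeberg--Feller CLT giving $P(Z_n>0\mid W_1,\dots,W_n)=\Phi\left(\mu S_n/V_n\right)+o(1)$, a law-of-large-numbers normalization $V_n^2/n\to\theta^2\bar{\sigma}^2$, and strict monotonicity of $\Phi$ to translate the deviation of the conditional probability from $1/2$ into the inequality $|S_n|>\lambda_n(\alpha)$. Your closing $\delta_n$-sandwich with Gaussian anti-concentration for $S_n/\sqrt{n}$ is simply a more careful justification of the boundary step that the paper dispatches with the brief remark that the approximation error vanishes and $S_n$ is independent of the remainder.
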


\smallskip
In particular, for every \(\alpha \in (0, 1/2)\), the threshold \(\lambda_n(\alpha)\) is asymptotically unique with high probability.

\paragraph{Sufficient conditions.}
The result holds in particular when the data \(X_1, \dots, X_n\) are i.i.d.\ with \(\mathbb{E}[X_1] = \mu \ne 0\) and \(\operatorname{Var}(X_1) = \sigma^2 < \infty\). In this case, the conditional Lindeberg condition is automatically satisfied by the classical Lindeberg–Feller \ac{clt}, since the summands \(W_i X_i\) form a triangular array with independent rows and uniformly bounded second moments. These are precisely the assumptions stated in Theorem~\ref{thm:strong}, which is therefore recovered as a particular case of the present general result.

\begin{proof}
Let us write \(\Sigma_n^2 := \sum_{i=1}^n W_i^2 \sigma_i^2\). Given the weights \(W_1, \dots, W_n\), we consider the normalized sum
\[
(Z_n - \mu S_n)/\Sigma_n\,\big|\, W_1, \dots, W_n.
\]
Under the assumptions of the theorem, the classical Lindeberg--Feller \ac{clt} for triangular arrays implies that this converges in distribution to a standard normal:
\[
(Z_n - \mu S_n)/\Sigma_n \,\big|\, W_1, \dots, W_n \xrightarrow{\mathcal{D}} \mathcal{N}(0,1).
\]
As a consequence,
\[
P(Z_n > 0 \,|\, W_1, \dots, W_n) = \Phi\left(\mu S_n/\Sigma_n\right) + o(1),
\]
where the error $o(1)$ vanishes in probability as \(n \to \infty\).

Now observe that since \(W_1, \dots, W_n\) are \emph{i.i.d.} with variance \(\theta^2\), we have by the law of large numbers that
\[
\sum_{i=1}^n W_i^2 = n\theta^2(1 + o(1)), \quad\text{and}\quad \Sigma_n^2 = \sum_{i=1}^n W_i^2 \sigma_i^2 = n\theta^2 \bar{\sigma}^2(1 + o(1)).
\]
Combining these gives
\[
\frac{\mu}{\Sigma_n} = \frac{\mu}{\theta \sqrt{\bar{\sigma}^2}} \cdot \frac{1}{\sqrt{n}}(1 + o(1)) := c_n,
\]
so that
\[
P(Z_n > 0 \,|\, W_1, \dots, W_n) = \Phi(c_n S_n) + o(1).
\]

Now fix any \(\alpha \in (0, 1/2)\). Since the standard normal cdf \(\Phi\) is strictly increasing and symmetric about zero, we find that
\[
\left| P(Z_n > 0 \,|\, W_1, \dots, W_n) - 1/2\right| > \alpha
\quad\text{ if and only if }\quad
|c_n S_n| > \Phi^{-1}\left(1/2 + \alpha\right).
\]
Solving for \(|S_n|\), this is equivalent to
\[
|S_n| > \Phi^{-1}\left(1/2 + \alpha\right)/c_n = \lambda_n(\alpha)(1 + o(1)).
\]
This equivalence holds with probability tending to one, since the approximation error vanishes and \(S_n\) is independent of the remainder.

Finally, note that the monotonicity of \(\Phi\) implies that this inequality can hold for at most one threshold \(\lambda_n(\alpha)\). Therefore, the threshold is asymptotically unique.\qedhere
\end{proof}

\begin{theorem*}[[Theorem~\ref{thrm:corollary} (Asymptotic skewness under random initialization)]]
Consider a neuron with zero bias and input $X_1,\ldots,X_n$ and weights $W_1,\ldots,W_n$. Assume $\{W_i\}$ are i.i.d. with $\mathbb E[W_1]=0$ and $\operatorname{Var}(W_1)=\theta^2\in(0,\infty)$, and $\{X_i\}$ are i.i.d., independent of $\{W_i\}$, with $\mathbb E[X_1]=\mu>0$ and $\operatorname{Var}(X_1)=\sigma^2<\infty$. Let $Z_n=\sum_{i=1}^n W_iX_i$. Then for every deterministic sequence $\alpha_n\downarrow 0$, the probability of having 
\[
\bigl| P(Z_n>0\mid W_1,\ldots,W_n)-1/2 \bigr|>\alpha_n
\]
tends to one as $n\to\infty$. Equivalently, with probability tending to one there exists a nonzero skewness level at which the neuron is skewed.
\end{theorem*}
\begin{proof}
Write $S_n=\sum_{i=1}^n W_i $ and fix any sequence $\alpha_n\downarrow 0$. By the Threshold Equivalence Theorem~\ref{thm:strong}, for each fixed $\alpha\in(0,1/2)$ the event
\[
E_n(\alpha):=\Bigl\{\bigl|\mathbb P(Z_n>0\mid W_1,\ldots,W_n)-1/2\bigr|>\alpha\Bigr\}
\]
is, with probability tending to one, equivalent to
\[
F_n(\alpha):=\Bigl\{|S_n|>\lambda_n(\alpha)\Bigr\},
\]
where 
\[
\lambda_n(\alpha)=(\theta\sigma\sqrt n)/\mu\cdot\Phi^{-1}\!\bigl(1/2+\alpha\bigr).
\]
To apply this along a vanishing sequence, choose a decreasing sequence $\alpha^{(k)}\downarrow 0$ and integers $N_k\uparrow\infty$ such that, for all $n\ge N_k$, the difference between $E_n(\alpha^{(k)})$ and $F_n(\alpha^{(k)})$ has probability at most $2^{-k}$. Define a piecewise constant selection $\alpha_n=\alpha^{(k)}$ for $N_k\le n<N_{k+1}$. This ensures
\[
\mathbb P\bigl(E_n(\alpha_n)\triangle F_n(\alpha_n)\bigr)\longrightarrow 0.
\]

It remains to show that $\mathbb P\bigl(F_n(\alpha_n)\bigr)\to 1$. By the Central Limit Theorem,
\[
\frac{S_n}{\theta\sqrt n}\Rightarrow \mathcal N(0,1).
\]
Since $\alpha_n\to 0$ and $\Phi^{-1}$ is continuous at $1/2$, we have that
\[
c_n:=\frac{\lambda_n(\alpha_n)}{\theta\sqrt n}
=\frac{\sigma}{\mu}\,\Phi^{-1}\!\bigl(\tfrac12+\alpha_n\bigr)\longrightarrow 0.
\]
Thus, if $\mathcal N(0,1)$ is denoted by $G$, it follows that
\[
\mathbb P\bigl(F_n(\alpha_n)\bigr)
=\mathbb P\!\left(\left|\frac{S_n}{\theta\sqrt n}\right|>c_n\right)
\longrightarrow \mathbb P(|G|>0)=1.
\]
Combining both steps yields
\[
\mathbb P\bigl(E_n(\alpha_n)\bigr)\ge \mathbb P\bigl(F_n(\alpha_n)\bigr)-\mathbb P\bigl(E_n(\alpha_n)\triangle F_n(\alpha_n)\bigr)\longrightarrow 1,
\]
and the result follows.
\end{proof}

\paragraph{Propagation of \textit{skewness} across layers.} \label{subsec:app_propagationskewness}
At the input layer, the assumption of equal means across coordinates, central to Theorem~\ref{thm:weak}, is typically satisfied due to standard data normalization practices. However, when standard stochastic initializations are used, the first hidden layer introduces an unavoidable imbalance: the randomness in the initialization causes the activations to deviate from a centered distribution, resulting in the emergence of \textit{skewed} neurons. Consequently, the inputs to the second layer are no longer mean-centered across dimensions. As training progresses deeper into the \ac{dnn}, these initial asymmetries are compounded, each layer propagates and potentially amplifies the imbalances introduced by the previous one.

This effect gradually breaks the correspondence between the \textit{skewness} of a neuron and the statistic \(|S|\), as established in Theorem~\ref{thm:weak}. In particular, the conditional expectation \(\mathbb{E}[Z_n \mid W_1, \dots, W_n] = \mu S_n\) hinges critically on the input means \(\mu_i\) being equal. Once this condition fails, the quantity \(\sum_i \mu_i W_i\) governing the conditional bias of the neuron is no longer proportional to \(S_n\), and the connection between sign bias and the statistic \(S\) deteriorates.

This breakdown is clearly observable in the evolution of the histogram shown in Figure~\ref{fig:skewed-histogram-depth}, which displays the distribution of the \textit{skewness} statistic \(S\) across layers in a multilayer perceptron with four hidden layers. If a layer is initiated with a stochastic criterion (such as Glorot), the distribution of $S$ closely follows a normal distribution, as predicted in Section \ref{sec:theory}. At the first hidden layer initiated with a random distribution (first row and first column of Figure \ref{fig:skewed-histogram-depth}), \textit{skewed} neurons clearly remain concentrated in the tails. However, as depth increases (second, third and fourth histograms in the first row), these blue tails representing \textit{skewed} neurons progressively dissolve and spread toward the center of the distribution. As a result, \textit{skewed} neurons begin to appear even at small values of \(S\), indicating that neuron \textit{skewness} becomes less predictable from the weight imbalance alone.
\begin{figure}[h]
    \centering
    \includegraphics[width=0.24\linewidth,trim={0.2cm 0.2cm 0.2cm 0.2cm},clip]{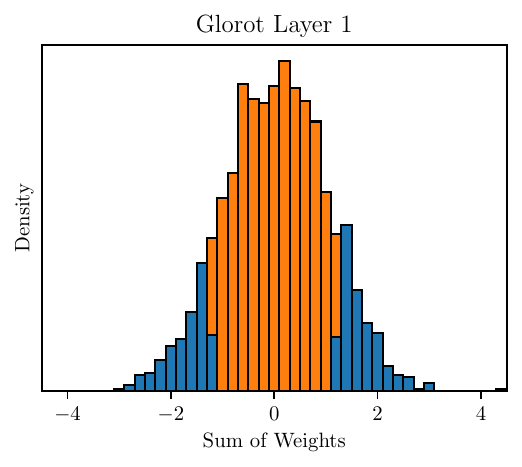}
    \includegraphics[width=0.24\linewidth,trim={0.2cm 0.2cm 0.2cm 0.2cm},clip]{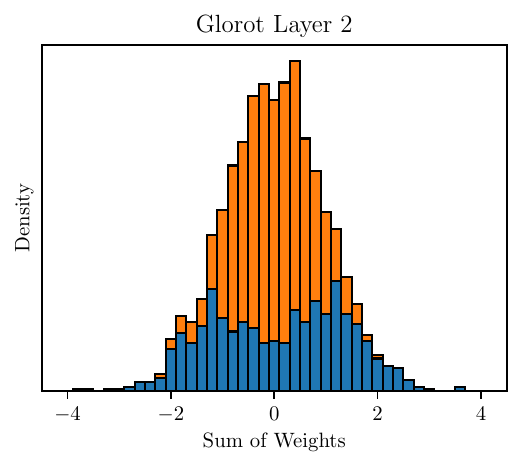}
    \includegraphics[width=0.24\linewidth,trim={0.2cm 0.2cm 0.2cm 0.2cm},clip]{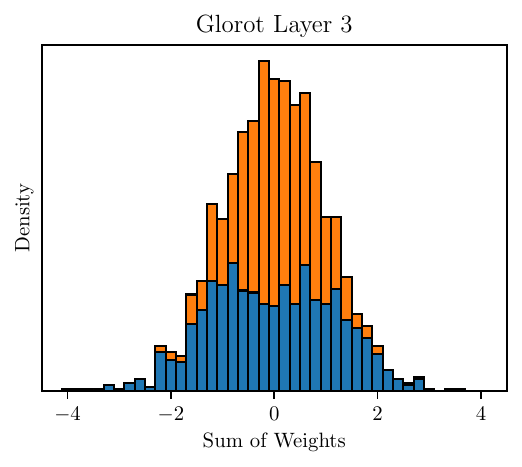}\\

    \includegraphics[width=0.24\linewidth,trim={0.2cm 0.2cm 0.2cm 0.2cm},clip]{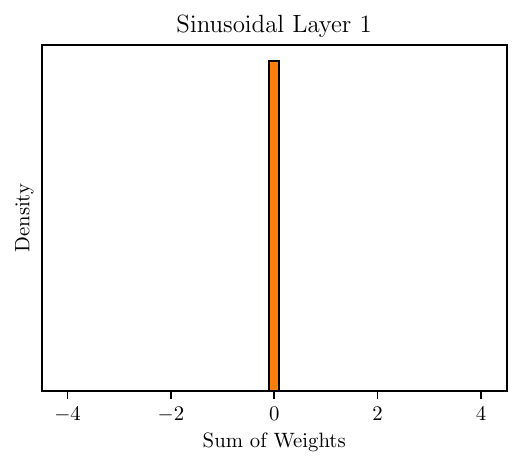}
    \includegraphics[width=0.24\linewidth,trim={0.2cm 0.2cm 0.2cm 0.2cm},clip]{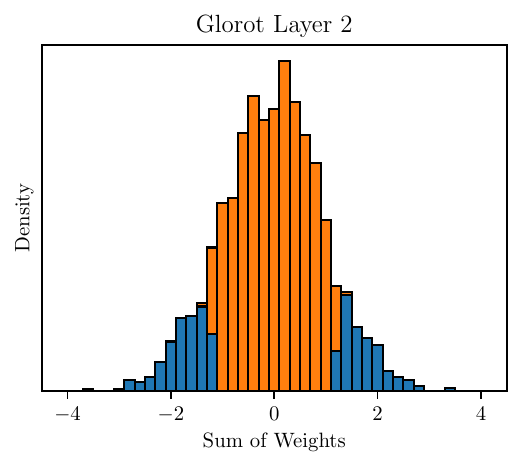}
    \includegraphics[width=0.24\linewidth,trim={0.2cm 0.2cm 0.2cm 0.2cm},clip]{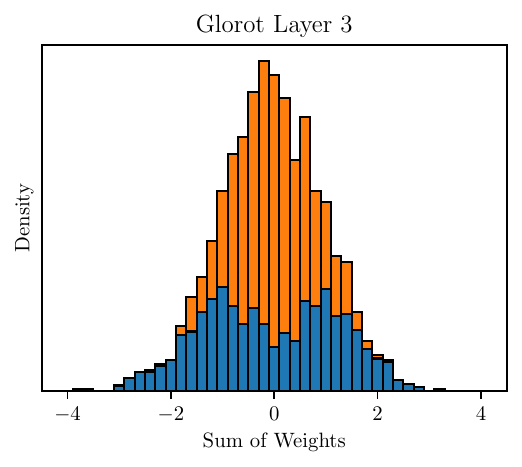}\\

    \includegraphics[width=0.24\linewidth,trim={0.2cm 0.2cm 0.2cm 0.2cm},clip]{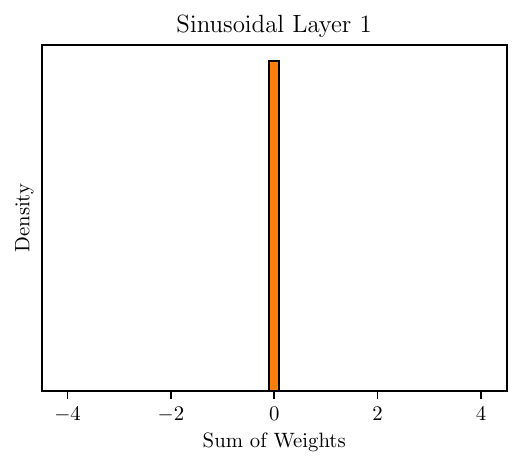}
    \includegraphics[width=0.24\linewidth,trim={0.2cm 0.2cm 0.2cm 0.2cm},clip]{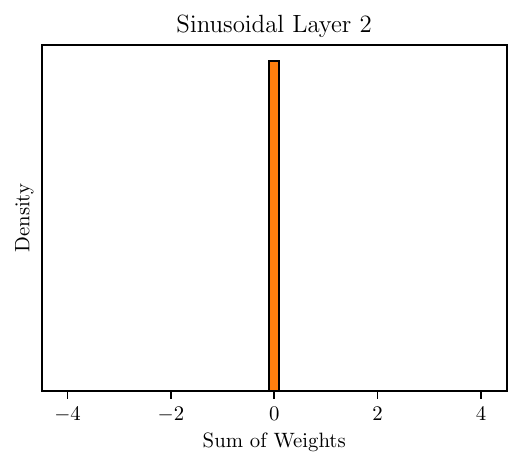}
    \includegraphics[width=0.24\linewidth,trim={0.2cm 0.2cm 0.2cm 0.2cm},clip]{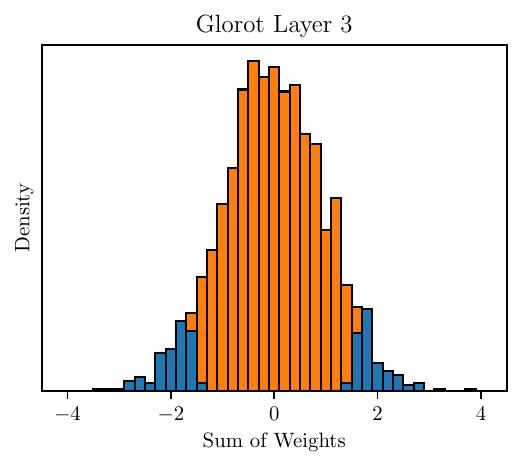}\\

    \includegraphics[width=0.24\linewidth,trim={0.2cm 0.2cm 0.2cm 0.2cm},clip]{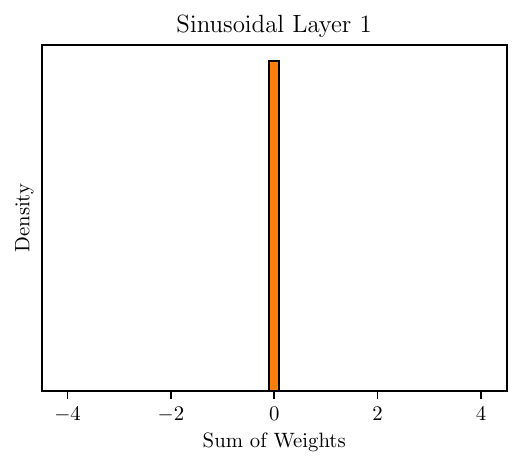}
    \includegraphics[width=0.24\linewidth,trim={0.2cm 0.2cm 0.2cm 0.2cm},clip]{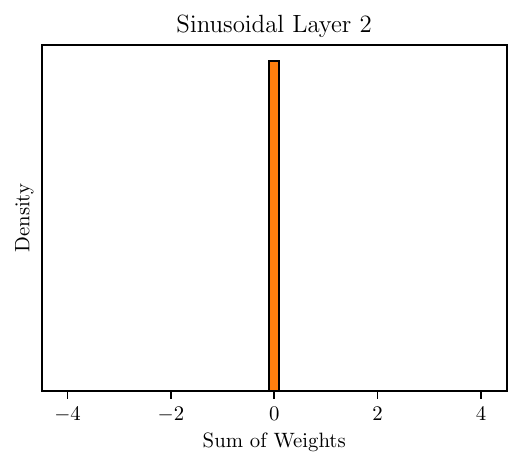}
    \includegraphics[width=0.24\linewidth,trim={0.2cm 0.2cm 0.2cm 0.2cm},clip]{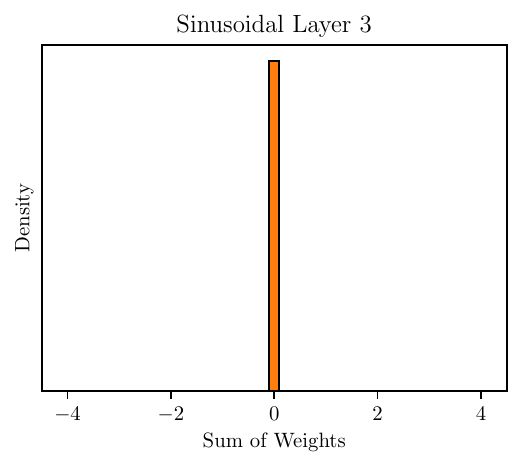}\\
    \includegraphics[width=0.24\linewidth,trim={0.2cm 0.2cm 0.2cm 0.2cm},clip]{figures/legend_skewedhistogram.pdf}\\
    \vspace{-0.2cm}
    \caption{Evolution of the values of the statistic \( S \) for \textit{skewed} neurons across layers in a \ac{mlp}.}
    \label{fig:skewed-histogram-depth}
\end{figure}

By contrast, the \textit{Sinusoidal} initialization fundamentally avoids this pathology. As shown in Theorem~\ref{thrm:s=0}, each row of the initialization matrix sums to zero, ensuring that the output of the first layer is strictly centered around zero for all neurons (first element of second row). This structural symmetry guarantees that no directional bias is introduced at the very start of the network. More importantly, it prevents the cascade of asymmetries that otherwise accumulates across layers. The rest of the figure shows how \textit{Sinusoidal} initializations in the first layers of the model help in progressively achieving an equilibrium in terms of \textit{skewed} neurons. As a result, the relationship between \textit{skewness} and \(S\) remains intact even in deeper layers, and the network preserves maximal discriminatory power throughout its depth. This stability is reflected both theoretically and empirically: in Table~\ref{table:skewed}, the proportion of \textit{skewed} neurons under \textit{Sinusoidal} initialization remains identically zero across all tested thresholds. This key structural ingredient, the enforced symmetry at initialization, lays the foundation for a qualitatively new regime in network design. It resolves long-standing inefficiencies related to neuron utilization and enables networks to start from a maximally expressive configuration, thereby accelerating training dynamics systematically. As such, the \textit{Sinusoidal} initialization marks a turning point in the initialization literature, offering a principled and effective alternative to stochastic schemes.

\subsection{Row cancellation in the Sinusoidal initialization}

As introduced in Section~\ref{sec:theory}, the \textit{Sinusoidal} initialization induces a deterministic structure on the weight matrix that stands in stark contrast with the randomness of classical schemes. This structure is not only symmetric, but also highly regular in a way that yields exact cancellation patterns across the rows of the initialization. In particular, each neuron's incoming weights sum to zero, leading to a strictly balanced preactivation regime from the very first forward pass.

The result we now prove formalizes this cancellation property. To that end, let us recall how the weights are defined under the \textit{Sinusoidal} initialization. Let \( W \in \mathbb{R}^{m \times n} \) be defined coordinate-wise as
\begin{equation*}
W[i,j] = a \cdot \sin(k_i \cdot x_j + \phi_i),
\end{equation*}
where
\begin{equation*}
    k_i = 2\pi i, \, x_j = j/n ~~\textrm{and}~~\phi_i = 2\pi i/m,
\end{equation*}
for $i \in \{1, \dots, m\}$ and $ j \in \{1, \dots, n\}$. 

\begin{theorem*}[Theorem~\ref{thrm:s=0} (Row Cancellation)]
Let \( W \in \mathbb{R}^{m \times n} \), with $m<n$, be defined as in Equation~\eqref{eq:sin_init}. Then, for every row index \( i \in \{1, \dots, m\} \), the entries along that row sum to zero:
\[
\sum_{j=1}^n W[i,j] = 0.
\]
\end{theorem*}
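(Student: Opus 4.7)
The plan is to recognize that each row of $W$ samples a sine wave at $n$ uniformly spaced points over an integer number of complete periods, and to exploit the resulting rotational symmetry. Concretely, I would fix $i \in \{1, \dots, m\}$, pull out the amplitude $a$ and the constant phase $\phi_i = 2\pi i/m$ (which does not depend on $j$), and reduce the claim to showing
\[
T_i := \sum_{j=1}^{n} \sin\!\left(j \cdot \frac{2\pi i}{n} + \frac{2\pi i}{m}\right) = 0.
\]

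The second step is to apply the standard closed-form sum of a finite arithmetic progression of sines,
\[
\sum_{j=1}^{n} \sin(j\alpha + \beta) = \frac{\sin(n\alpha/2)}{\sin(\alpha/2)} \, \sin\!\left(\frac{(n+1)\alpha}{2} + \beta\right),
\]
with $\alpha = 2\pi i/n$ and $\beta = 2\pi i/m$. The desired cancellation then falls out of the numerator alone: $n\alpha/2 = \pi i$, and since $i$ is a positive integer we have $\sin(\pi i) = 0$, so $T_i = 0$. Equivalently, one can observe in one line that the complex partial sum $\sum_{j=1}^{n} \bigl(e^{\mathrm{i}\,2\pi i/n}\bigr)^{j}$ is a finite geometric series whose ratio is a (generally non-trivial) $n$-th root of unity, and thus equals $0$; taking imaginary parts after factoring $e^{\mathrm{i}\phi_i}$ gives $T_i = 0$.

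The only delicate point I anticipate is the degenerate case $\sin(\alpha/2) = \sin(\pi i/n) = 0$, which would occur only if the row index $i$ were a positive multiple of $n$. In this situation the Dirichlet-type identity is not literally applicable and, in fact, the sum collapses to $n \sin(2\pi i/m)$, which need not vanish. For the architectures considered in the paper the layer widths satisfy $i \le m \le n$, so this edge case does not arise and the statement is unambiguous; alternatively, the geometric-series derivation sidesteps the division entirely and makes the hypothesis $\omega := e^{\mathrm{i}\,2\pi i/n} \neq 1$ explicit. I expect this boundary issue to be the only obstacle worth flagging; the remainder of the proof is essentially a single trigonometric identity.
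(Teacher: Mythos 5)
Your proof is correct and follows essentially the same route as the paper: after pulling out the amplitude and phase, both arguments reduce the row sum to the vanishing of the geometric series \(\sum_{j=1}^{n} \bigl(e^{\mathrm{i}\,2\pi i/n}\bigr)^{j}\) over \(n\)-th roots of unity (your Lagrange sum-of-sines identity is just a repackaging of that fact). The degenerate case you flag, \(i\) a positive multiple of \(n\), is a genuine boundary issue that the paper's own proof silently passes over when it divides by \(1-\omega\) (it only arises when \(m \ge n\), where the row is constant and the sum equals \(n\,a\sin(2\pi i/m)\)), so your explicit remark that the argument requires \(\omega \neq 1\) makes your write-up, if anything, more careful than the published one.
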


\begin{proof}
Fix any \( i \in \{1, \dots, m\} \). We consider the sum:
\[
\sum_{j=1}^n W[i,j] = a \sum_{j=1}^n \sin\left(2\pi i \cdot \frac{j}{n} + \frac{2\pi i}{m} \right).
\]
Applying the identity \( \sin(\alpha + \beta) = \sin\alpha\cos\beta + \cos\alpha\sin\beta \), we rewrite the sum as
\[
\sum_{j=1}^n \sin\left(2\pi i \cdot \frac{j}{n} + \frac{2\pi i}{m} \right)
= \cos\left(\frac{2\pi i}{m}\right) \sum_{j=1}^n \sin\left(\frac{2\pi i j}{n}\right)
+ \sin\left(\frac{2\pi i}{m}\right) \sum_{j=1}^n \cos\left(\frac{2\pi i j}{n}\right).
\]
Thus, to conclude the proof, it suffices to show that both trigonometric sums vanish, that is, 
\[
\sum_{j=1}^n \sin\left(\frac{2\pi i j}{n}\right) = 0, \quad \sum_{j=1}^n \cos\left(\frac{2\pi i j}{n}\right) = 0.
\]
To show this, let $\omega\in \mathbb{C}$ be the unitary complex number and angle $2\pi i /n$, which clearly satisfies that $w^n = 1$. As $i\leq m <n$, it follows that $\omega \neq 1$, and hence
\[
\sum_{j=1}^n w^j = \omega \, \dfrac{1-\omega^n}{1-\omega} = 0.
\]
The two real-valued sums above correspond to the imaginary and real parts of this complex sum, respectively. Therefore, both must vanish, and the result follows.
\end{proof}
\section{Experimental Setup}\label{app:setup}
This appendix contains additional material complementing the main text. Specifically, it provides complete details for all experiments presented in the paper, including training configurations and hardware specifications.

For full reproducibility, all code used to conduct the experiments is publicly available in the accompanying GitHub repository: \url{https://github.com/jmiravet/Sinusoidal-Initialization}.

To assess the robustness of our findings, all experiments supporting the main claims were repeated three times. We report mean values across these runs. Standard deviations were computed and found to be consistently small. As these deviations do not materially affect the conclusions, they are omitted from the main tables and figures for clarity. 

\subsection{Models, datasets and initializations}\label{app:modelsdatasets}
This work employs a range of \ac{dnn} architectures across vision and language tasks. The models and datasets used are listed below, along with their sources and brief descriptions.

\paragraph{Models}
\begin{itemize}
    \item \textbf{ResNet-50} (\href{https://pytorch.org/vision/stable/models/generated/torchvision.models.resnet50.html}{Torchvision}): A deep residual network with 50 layers, widely used for image classification tasks.
    \item \textbf{MobileNetV3 small} (\href{https://pytorch.org/vision/stable/models/generated/torchvision.models.mobilenet_v3_small.html}{Torchvision}): A lightweight convolutional neural network optimized for mobile and low-resource environments.
    \item \textbf{EfficientNetV2 small} (\href{https://pytorch.org/vision/stable/models/generated/torchvision.models.efficientnet_v2_s.html}{Torchvision}): A family of convolutional models optimized for both accuracy and efficiency.
    \item \textbf{ViT B16} (\href{https://pytorch.org/vision/stable/models/generated/torchvision.models.vit_b_16.html}{Torchvision}): A Vision Transformer model that applies transformer architectures to image patches.
    \item \textbf{BERT-mini} (\href{https://huggingface.co/prajjwal1/bert-mini}{Hugging Face}): A compact version of BERT for natural language processing tasks, trained with the Masked Language Modeling (MLM) objective.
\end{itemize}

\paragraph{Datasets}
\begin{itemize}
    \item \textbf{CIFAR-100} (\href{https://pytorch.org/vision/stable/generated/torchvision.datasets.CIFAR100.html}{Torchvision}): A dataset of 60,000 32×32 color images in 100 classes, with 600 images per class.
    \item \textbf{Tiny-ImageNet} (\href{https://www.kaggle.com/datasets/xiataokang/tinyimagenettorch}{Kaggle}): A subset of ImageNet with 200 classes and 64×64 resolution images, sourced from the Kaggle repository \texttt{xiataokang/tinyimagenettorch}.
    \item \textbf{ImageNet-1k} (\href{https://image-net.org/challenges/LSVRC/2012/}{Official Repository}): A large-scale dataset containing over 1.2 million images across 1,000 categories, used for benchmarking image classification models.
    \item \textbf{WikiText2} (\href{https://huggingface.co/datasets/wikitext}{HuggingFace Datasets}): A small-scale dataset for language modeling, using the \texttt{wikitext-2-raw-v1} version to preserve raw formatting.
\end{itemize}

\paragraph{Initialization Procedures}

Four initialization schemes were considered in this work:

\begin{itemize}
\item \textbf{Default}: This corresponds to the initialization procedure provided in the original codebase for each model, as implemented in the respective libraries (e.g., Torchvision or Hugging Face).
\item \textbf{Orthogonal} and \textbf{Sinusoidal}: These initializations are manually applied to the weights of all convolutional and linear layers in the model.
\item \textbf{LSUV (Layer-Sequential Unit Variance)}: Implemented using the authors' original repository (\href{https://github.com/ducha-aiki/lsuv}{github.com/ducha-aiki/lsuv}). This method is applied to convolutional and linear layers and requires a single batch of data from the associated dataset to initialize.
\end{itemize}

All resources were used in accordance with their respective licenses and usage policies. Note that all dataset splits used correspond to the default configurations

\subsection{Activation Imbalance Evaluation}\label{app:imbalance}
This subsection provides a detailed description of the experiments underlying the theoretical insights discussed in Section~\ref{sec:theory}, particularly those supporting Table~\ref{table:skewed} and Figures~\ref{fig:activation_state} and~\ref{fig:skewed-histogram}.

\vspace{0.5em}
\noindent\textbf{Experiment 1: Neuron Activation States (ViT)}
\vspace{0.3em}

To analyze the activation state of neurons, we used the Vision Transformer (ViT-B/16) model trained on ImageNet-1k. The model was initialized using each of the initialization schemes considered in this work, and a forward pass was performed using 768 randomly sampled images from the ImageNet-1k validation set. We captured the output of the second linear layer in the feed-forward block of the last encoder layer—i.e., the final layer before the classifier—resulting in a tensor of shape $768 \times 768$ (samples × neurons). To enable visual representation within the paper, we subsampled this tensor to 250 neurons and 250 samples, preserving the structure and interpretability of the activation patterns. This experiment provides the data used in Figure~\ref{fig:activation_state}.

\vspace{0.5em}
\noindent\textbf{Experiment 2: Statistic $S$ and \textit{skewness} Relationship and OUI}
\vspace{0.3em}

To isolate the relation between the statistic $S$ and neuron \textit{skewness}, we constructed a synthetic feedforward model consisting of a 3-layer MLP with the structure: \texttt{ReLU → Linear → ReLU → Linear → ReLU → Linear}. The input data consisted of random vectors sampled from a Normal distribution. This controlled setting enables clear observation of activation statistics before depth-related effects dilute the signal, as discussed in Appendix~\ref{subsec:app_propagationskewness}.

The histogram in Figure~\ref{fig:skewed-histogram} was generated using the outputs from the \emph{first linear layer}, where the relationship with the statistic $S$ is still prominent. In contrast, the statistics reported in Table~\ref{table:skewed}—such as the percentage of \textit{skewned} neurons at significance levels $\alpha = 0.1$ and $0.3$, and the \ac{oui} was computed from the \emph{final linear layer} in the MLP, where cumulative effects of initialization are more pronounced.

We confirmed the robustness of these results by also sampling inputs from a Uniform distribution, observing consistent trends in \textit{skewness} behavior and $S$-statistic correlation.

\subsection{Benchmark Evaluation}\label{app:benchmark}

This subsection provides the full experimental details for the training process corresponding to the experiments presented in Section~\ref{sec:experiments}, and summarized in Table~\ref{tab:accuracy} and Figures~\ref{fig:intro_results}, \ref{fig:training_curves}, and \ref{fig:app_training_curves}. The tables below detail the training hyperparameters (Table \ref{tab:training_hyperparams}) and hardware (Table \ref{tab:training_hardware}) used for each model-dataset pair.

\begin{table}[h]
\centering
\small
\caption{Training hyperparameters for all evaluated models.}
\label{tab:training_hyperparams}
\resizebox{\textwidth}{!}{%
\begin{tabular}{ccccccc}
\toprule
\textbf{Model} & \textbf{Dataset} & \textbf{Epochs} & \textbf{Optimizers} & \textbf{LR} & \textbf{WD} & \textbf{Batch size} \\
\midrule
ResNet‑50 & CIFAR‑100 & 100 & SGD, Adam, AdamW & $10^{-3}$ & $10^{-3}$ & 64 \\
MobileNetV3 & Tiny-ImageNet & 200 & SGD, Adam, AdamW & $10^{-3}$ & $10^{-3}$ & 64 \\
EfficientNetV2 & Tiny-ImageNet & 200 & SGD, Adam, AdamW & $10^{-3}$ & $10^{-3}$ & 64 \\
ViT-B16 & ImageNet-1k & 100 & SGD & $10^{-3}$ & $10^{-3}$ & 64 \\
BERT-mini & WikiText & 200 & AdamW & $5\cdot 10^{-5}$ & $10^{-3}$ & 16 \\
\bottomrule
\end{tabular}}%
\end{table}

\begin{table}[h!]
\centering
\small
\caption{Hardware and runtime per single training experiment.}
\label{tab:training_hardware}
\begin{tabular}{cccc}
\toprule
\textbf{Model} & \textbf{Dataset} & \textbf{GPU} & \textbf{Training time (hours)} \\
\midrule
ResNet-50 & CIFAR-100 & NVIDIA A100-SXM4-80GB & 2 \\
MobileNetV3 & Tiny-ImageNet & NVIDIA A100-SXM4-80GB & 4 \\
EfficientNetV2 & Tiny-ImageNet & NVIDIA A100-SXM4-80GB & 4 \\
ViT-B16 & ImageNet-1k & NVIDIA H100-PCIe-94GB & 168 \\
BERT-mini & WikiText & NVIDIA A100-SXM4-80GB & 2 \\
\bottomrule
\end{tabular}
\end{table}

\section{Additional experimental results}\label{app:additional}
\subsection{Arcsine random initialization}
To further validate that the key factor behind the effectiveness of our \textit{Sinusoidal} initialization is the structure it imposes, rather than its unusual distribution, we performed an additional experiment using a randomized initialization. In this variant, weights were sampled independently from the {\small \textsf{arcsine}} distribution, which matches the distribution of our \textit{Sinusoidal} initialization but lacks its balanced structure.

We replicated the experiments from Figure~\ref{fig:activation_state} and Figure~\ref{fig:skewed-histogram} using this randomized initialization. The results, shown in Figure~\ref{fig:app_sinusoidal_random}, demonstrate that this initialization behaves similarly to other stochastic methods: the activations exhibit comparable neuron \textit{skewness}. This confirms that the stochastic nature of the weights, not the distribution itself, is responsible for the observed effects.  
\begin{figure}[ht]
\centering
\includegraphics[height=3.5cm]{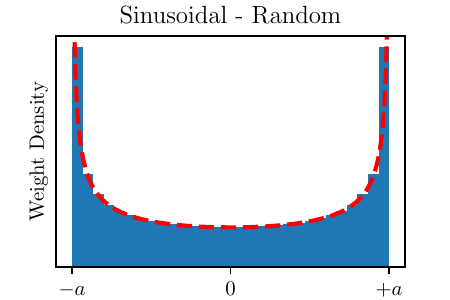} \hspace{0.1cm}
\includegraphics[height=3.5cm,trim={0.2cm 0.cm 0.cm 0.2cm},clip]{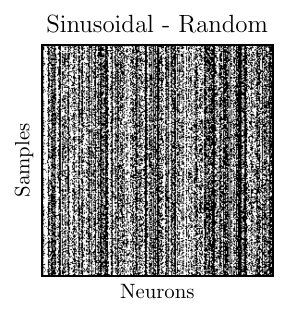} \hspace{0.2cm}
\includegraphics[height=3.5cm,trim={0.cm 0.2cm 0.25cm 0.2cm},clip]{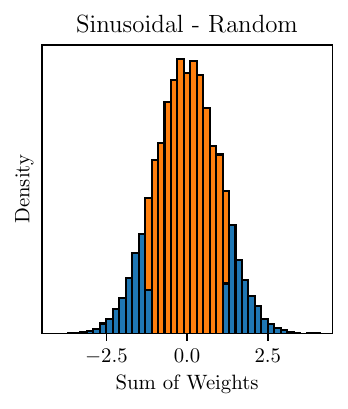}
\caption{Weight distribution, neuron activation states and histogram comparing the statistic S values with neuron \textit{skewness} at $\alpha=0.3$ for a random initialization with {\small \textsf{arcsine}} distribution.}
\label{fig:app_sinusoidal_random}
\end{figure}

These findings make it clear that the structure introduced by the \textit{Sinusoidal} initialization is the critical component driving its benefits. This outcome is consistent with the theory established in Subsection~\ref{subsec:impact_of_randomness}, and further reinforces that the deterministic layout of weights plays a central role in mitigating activation \textit{skewness}. To further support this conclusion, we replicated the experiment reported in Section~\ref{sec:experiments} on ResNet-50 with CIFAR-100 and SGD. The training results, presented in Figure~\ref{fig:arcsine}, clearly show how the randomness of the arcsine variant delays convergence with respect to the structured \textit{Sinusoidal} initialization, which provides a faster training process.
\vspace{-0.4cm}
\begin{figure}[ht]
    \centering
    \includegraphics[width=0.4\textwidth]{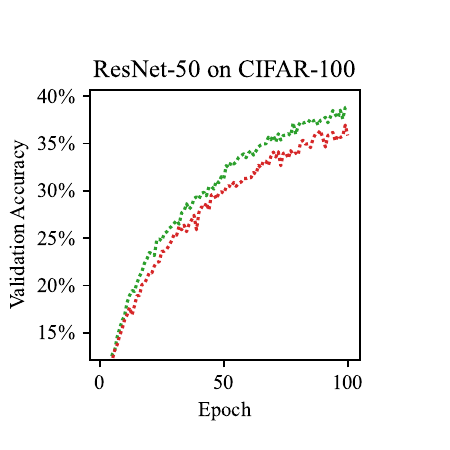} \\
    \vspace{-0.2cm}
    \includegraphics[width=0.4\textwidth]{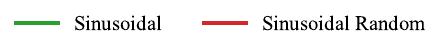}

    \caption{Training curves of the \textit{Sinusoidal} initialization and its randomized counterpart, showing that the structural arrangement of weights accelerates the initial stages of training.}
    \label{fig:arcsine}
\end{figure}

\subsection{Comparison with other deterministic initializations}

In addition to stochastic baselines, a few deterministic initialization schemes have also been proposed in the literature, namely those introduced in \cite{zhao2022zero} and \cite{blumenfeld2020beyond}, already referenced in Section~\ref{sec:background}. To illustrate the relative performance of these methods, we provide a representative experiment on ResNet-50 with CIFAR-100 using SGD, under the same hyperparameter configuration described in Section~\ref{sec:experiments}. While this comparison is restricted to a single training setup, it provides a direct assessment of how these deterministic strategies behave under identical conditions. The results are shown in Figure~\ref{fig:inits_deterministic}.  
\begin{figure}[ht]
    \centering
    \includegraphics[width=0.4\textwidth]{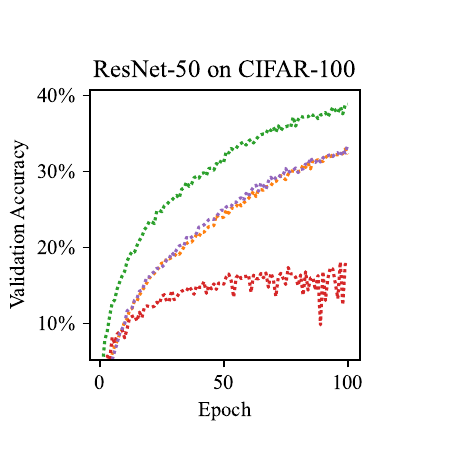} \\
    \vspace{-0.2cm}
    \includegraphics[width=0.6\textwidth]{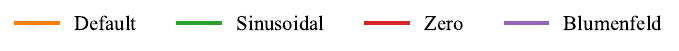}

    \caption{Training curves of \textit{Default}, \textit{Sinusoidal}, \textit{Zero} and \textit{Blumenfeld}  initializations, showing that our approach consistently outperforms the other deterministic methods.}
    \label{fig:inits_deterministic}
\end{figure}

As can be observed, the \textit{Sinusoidal} initialization clearly outperforms both alternatives, achieving faster convergence and higher final accuracy. This experiment thus provides compelling evidence that, within the family of deterministic approaches, our method offers a distinct advantage.  

\newpage
\subsection{Additional Training Plots}\label{app:additional_trainig_plots}

To complement the main experimental results presented in Section~\ref{sec:experiments}, we include in Figure~\ref{fig:app_training_curves} a detailed comparison of the training dynamics between the default and \textit{Sinusoidal} initialization schemes. For each model and dataset combination, and for every optimizer used during training, we plot both the loss and accuracy curves across epochs.

This visualization allows for a direct assessment of convergence behavior and generalization performance throughout training. Notably, the figure highlights differences in early-stage convergence speed, final validation accuracy levels, and training loss smoothness under the two initialization schemes.

\newpage
\begin{figure}[ht]
\centering
    \includegraphics[width=0.42\textwidth,trim={0.5cm 0 0.9cm 0},clip]{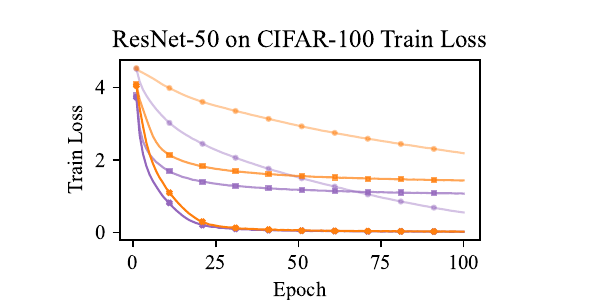}
    \includegraphics[width=0.42\textwidth,trim={0.5cm 0 0.9cm 0},clip]{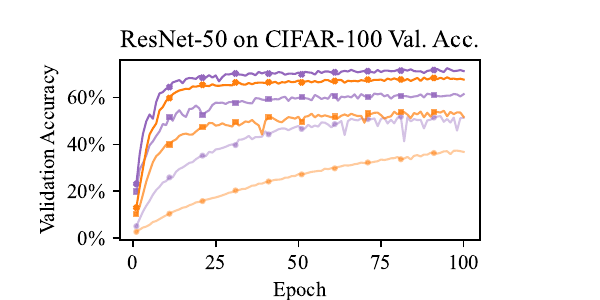}
    
    \includegraphics[width=0.42\textwidth,trim={0.5cm 0 0.9cm 0},clip]{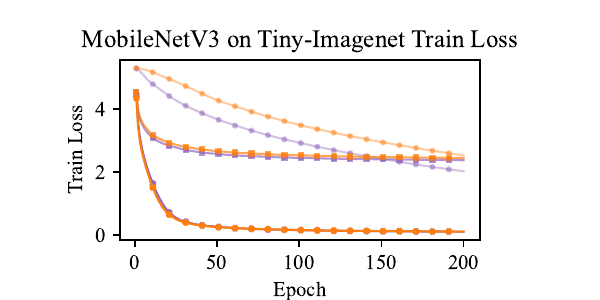}
    \includegraphics[width=0.42\textwidth,trim={0.5cm 0 0.9cm 0},clip]{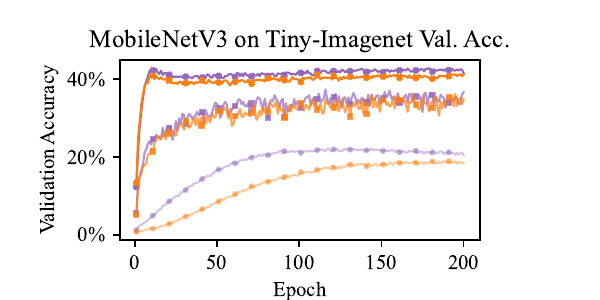}
    
    \includegraphics[width=0.42\textwidth,trim={0.5cm 0 0.9cm 0},clip]{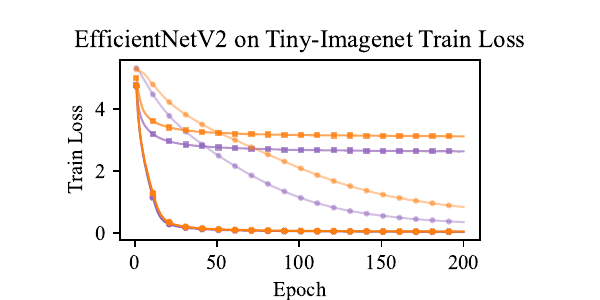}
    \includegraphics[width=0.42\textwidth,trim={0.5cm 0 0.9cm 0},clip]{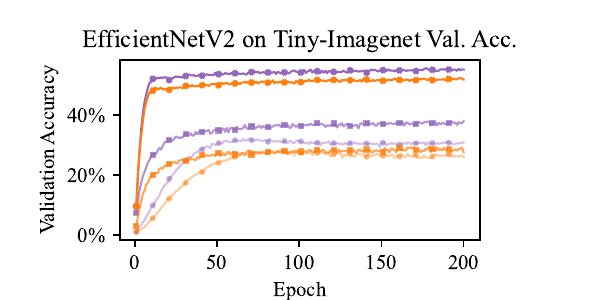}

    \includegraphics[width=0.42\textwidth,trim={0.5cm 0 0.9cm 0},clip]{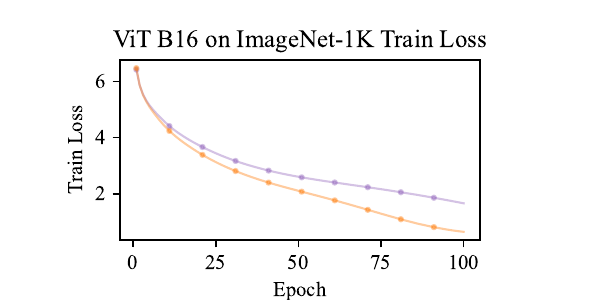}
    \includegraphics[width=0.42\textwidth,trim={0.5cm 0 0.9cm 0},clip]{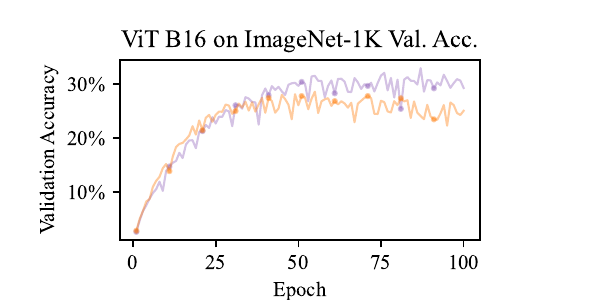}
    
    \includegraphics[width=0.42\textwidth,trim={0.5cm 0 0.9cm 0},clip]{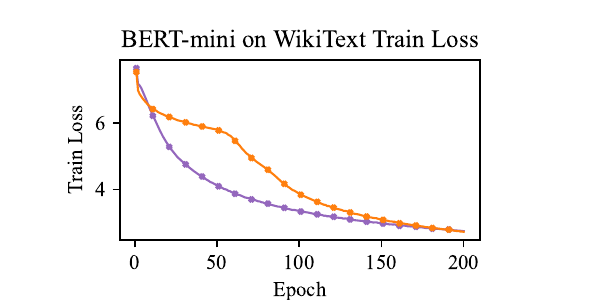}
    \includegraphics[width=0.42\textwidth,trim={0.5cm 0 0.9cm 0},clip]{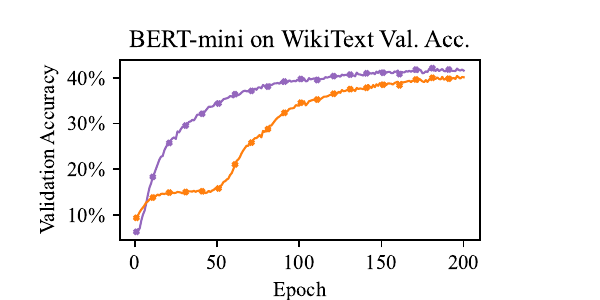}
    
    \includegraphics[width=1\textwidth]{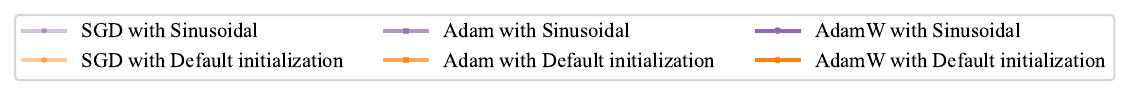}
\caption{Training loss and accuracy curves for all model-dataset pairs using default and \textit{Sinusoidal} initialization. Each subplot shows the result for a particular model-optimizer combination.}
\label{fig:app_training_curves}
\end{figure}

\end{document}